\theoremstyle{plain}
\newtheorem{theorem}{Theorem}[section]
\newtheorem{lemma}[theorem]{Lemma}
\theoremstyle{definition}
\newtheorem{definition}[theorem]{Definition}
\theoremstyle{remark}
\definecolor{codegreen}{rgb}{0,0.3,0.6}
\definecolor{codegray}{rgb}{0.5,0.5,0.5}
\definecolor{codepurple}{rgb}{0.58,0,0.82}
\definecolor{backcolour}{rgb}{0.95,0.95,0.92}
\definecolor{mylavendar}{RGB}{215,131,255}
\definecolor{myblue}{RGB}{0, 150, 255}
\definecolor{mylightblue}{RGB}{118, 214, 255}
\definecolor{mygreen}{RGB}{115, 250, 121}
\definecolor{myred}{RGB}{255, 38, 0}
\definecolor{-}{rgb}{0.25,0.41,0.88}
\definecolor{+}{rgb}{0.70,0.13,0.13}
\definecolor{infoNCE_blut}{rgb}{0.2980392156862745, 0.4470588235294118, 0.6901960784313725}
\definecolor{adnce_1}{rgb}{0.8666666666666667, 0.5176470588235295, 0.3215686274509804}
\definecolor{adnce_2}{rgb}{0.3333333333333333, 0.6588235294117647, 0.40784313725490196}
\definecolor{adnce_5}{rgb}{0.5764705882352941, 0.47058823529411764, 0.3764705882352941}
\lstdefinestyle{mystyle}{
    %backgroundcolor=\color{backcolour},   
    basicstyle=\tiny,
    commentstyle=\color{codegreen},
    keywordstyle=\color{magenta},
    numberstyle=\tiny\color{codegray},
    stringstyle=\color{codepurple},
    %basicstyle=\ttfamily\footnotesize,
    basicstyle=\fontsize{8.5}{9}\selectfont\ttfamily,
    breakatwhitespace=false,         
    breaklines=true,                 
    captionpos=b,                    
    keepspaces=true,                 
    numbers=left,                    
    numbersep=5pt,                  
    showspaces=false,                
    showstringspaces=false,
    frame = single
}
\title{Understanding Contrastive Learning via Distributionally Robust Optimization}
\author{
   Junkang Wu$^{1}$~~Jiawei Chen$^{2}$\thanks{Jiawei Chen and Xiangnan He are the corresponding authors.}~~Jiancan Wu$^{1}$~~Wentao Shi$^{1}$~~Xiang Wang$^{1}$\thanks{Xiang Wang is also affiliated with Institute of Artificial Intelligence, Institute of Dataspace, Hefei Comprehensive National Science Center.}~~Xiangnan He$^{1}$\footnotemark[1]\\
%   An Zhang$^{\dag\ddag}$~~Wenchang Ma$^{\ddag}$~~Xiang Wang$^{\S}$\thanks{Xiang Wang is the corresponding author.}~~~Tat-Seng Chua$^{\dag\ddag}$\\
  $^{1}$University of Science and Technology of China\\
  $^{2}$Zhejiang University\\
  \texttt{\{jkwu0909, wujcan, xiangwang1223, xiangnanhe\}@gmail.com},\\~~\texttt{sleepyhunt@zju.edu.cn},
  \texttt{shiwentao123@mail.ustc.edu.cn}
%   ,~~\texttt{xiangwang1223@gmail.com}, ~~\texttt{xiangnanhe@gmail.com}
}
\newcommand{\ie}{\emph{i.e., }}
\newcommand{\eg}{\emph{e.g., }}
\newcommand{\st}{\emph{s.t. }}
\newcommand{\wrt}{\emph{w.r.t. }}
\newcommand{\cf}{\emph{cf. }}
\newcommand{\aka}{\emph{a.k.a. }}
\newcommand\tf[1]{\textbf{#1}}
\newcommand{\ba}{$_\texttt{base}$}
\newcommand{\ours}{SimCSE\xspace}
\begin{document}

\maketitle

\begin{abstract}
   This study reveals the inherent tolerance of contrastive learning (CL) towards sampling bias, wherein negative samples may encompass similar semantics (\eg labels). However, existing theories fall short in providing explanations for this phenomenon. We bridge this research gap by analyzing CL through the lens of distributionally robust optimization (DRO), yielding several key insights: (1) CL essentially conducts DRO over the negative sampling distribution, thus enabling robust performance across a variety of potential distributions and demonstrating robustness to sampling bias; (2) The design of the temperature $\tau$ is not merely heuristic but acts as a Lagrange Coefficient, regulating the size of the potential distribution set; (3) A theoretical connection is established between DRO and mutual information, thus presenting fresh evidence for ``InfoNCE as an estimate of MI'' and a new estimation approach for $\phi$-divergence-based generalized mutual information. We also identify CL's potential shortcomings, including over-conservatism and sensitivity to outliers, and introduce a novel Adjusted InfoNCE loss (ADNCE) to mitigate these issues. It refines potential distribution, improving performance and accelerating convergence. Extensive experiments on various domains (image, sentence, and graphs) validate the effectiveness of the proposal. The code is available at \url{https://github.com/junkangwu/ADNCE}. 
\end{abstract}

\section{Introduction}
Recently, contrastive learning (CL) \cite{chopra2005learning, hadsell2006dimensionality, oord2018representation} has emerged as one of the most prominent self-supervised methods, due to its empirical success in computer vision \cite{chen2020simple, chen2020improved, he2020momentum, tian2020contrastive, zbontar2021barlow}, natural language processing \cite{gao2021simcse, kong2019mutual} and graph \cite{li2022let, you2020graph}. The core idea is to learn representations that draw positive samples (\eg augmented data from the same image) nearby and push away negative samples (\eg augmented data from different images). By leveraging this intuitive concept, unsupervised learning has even begun to challenge supervised learning.

Despite its effectiveness, recent study \cite{chuang2020debiased} has expressed concerns about sampling bias in CL. Negative samples are often drawn uniformly from the training data, potentially sharing similar semantics (\eg labels). Blindly separating these similar instances could result in a performance decline. 
To address sampling bias, existing research has proposed innovative solutions that either approximate the optimal distribution of negative instances \cite{chuang2020debiased, robinson2020contrastive} or integrate an auxiliary detection module to identify false negative samples \cite{chen2021incremental}. Nevertheless, our empirical analyses challenge the prevailing understanding of CL. Our findings indicate that CL inherently exhibits robustness towards sampling bias and, by tuning the temperature $\tau$, can even achieve comparable performance with representative methods for addressing sampling bias (\eg DCL \cite{chuang2020debiased} and HCL \cite{robinson2020contrastive}).
This revelation prompts two intriguing questions: \textit{1) Why does CL display tolerance to sampling bias? and 2) How can we thoroughly comprehend the role of the temperature $\tau$?} 
Despite recent analytical work on CL from various perspectives \cite{chuang2020debiased, robinson2020contrastive, tian2022understanding}, including mutual information and hard-mining, these studies fail to provide a satisfactory explanation for our observations.

To address this research gap, we investigate CL from the standpoint of Distributionally Robust Optimization (DRO) \cite{duchi2018learning, huang2022coresets, mohajerin2018data, namkoong2017variance, shafieezadeh2015distributionally, sinha2017certifying}. DRO refers to a learning algorithm that seeks to minimize the worst-case loss over a set of possible distributions. Through rigorous theoretical analysis, we prove that CL essentially performs DRO optimization over a collection of negative sampling distributions that surround the uniform distribution, constrained by a KL-divergence-based measure (we term the DRO-type equivalent expression of CL as CL-DRO). By enabling CL to perform well across various potential distributions, DRO equips it with the capacity to alleviate sampling bias. Furthermore, our findings emphasize that the design of $\tau$ is not merely heuristic but acts as a Lagrange Coefficient, regulating the size of the potential distribution set. By integrating the principles of DRO, we can also gain insights into the key properties of CL such as hard-mining, and investigate novel attributes such as variance control.

Furthermore, we delve into the properties of CL-DRO, transcending the boundary of KL divergence, but exploring a broader family of distribution divergence metrics --- $\phi$-divergence. It exhibits greater flexibility and potentially superior characteristics compared to KL-divergence. 
Interestingly, we establish the equivalence between CL-DRO under any $\phi$-divergence constraints and the corresponding tight variational representation of $\phi$-divergence \cite{agrawal2021optimal}. This discovery provides direct theoretical support for the assertion that ``\textit{InfoNCE is an estimate of mutual information (MI)}'', without requiring redundant approximation as before work \cite{belghazi2018mutual, oord2018representation}. 
Additionally, it provides a novel perspective for estimating arbitrary $\phi$-divergence, consequently enabling the estimation of $\phi$-divergence-based mutual information.

In addition, we discover some limitations in CL through the lens of DRO. One is that the DRO paradigm is too conservative, focusing on worst-case distribution with giving unadvisedly over-large weights on hard negative samples. Secondly, the presence of outliers can significantly impact DRO \cite{zhai2021doro}, resulting in both a drop in performance and training instability. In fact, both challenges stem from the unreasonably worst-case distribution weights assigned to negative samples. To address this weakness, we propose \textbf{AD}justed Info\textbf{NCE} (ADNCE) that reshapes the worst-case distribution without incurring additional computational overhead, allowing us to explore the potentially better negative distribution. Empirical studies show that our method can be applied to various domains (such as images, sentences, and graphs), and yield superior performance with faster convergence.

\section{Related Work}
\tf{Contrastive Learning.}
With numerous CL methodologies (\eg SimCLR \cite{chen2020simple}, Moco \cite{he2020momentum}, BYOL \cite{grill2020bootstrap}, SwAV \cite{caron2020unsupervised}, Barlow Twins \cite{zbontar2021barlow}) having been proposed and demonstrated the ability to produce high-quality representations for downstream tasks, theoretical research in this field has begun to emerge. Wang et al. \cite{DBLP:conf/cvpr/WangL21a} revealed the hard-mining property and Liu et al. \cite{liu2021self} emphasized its robustness against dataset imbalance. Meanwhile, Xue et al. \cite{xue2022investigating} showed that representations learned by CL provably boosts robustness against noisy labels, and Ko et al. \cite{ko2022revisiting} investigated the relationship between CL and neighborhood component analysis. More closely related to our work is the result of Tian et al. \cite{tian2022understanding} that analyzed CL from minmax formulation. 

\tf{Distributionally Robust Optimization.}
In contrast to conventional robust optimization approaches \cite{jin2020sampling, lian2020personalized, wu2023influence}, DRO aims to solve the distribution shift problem by optimizing the worst-case error in a pre-defined uncertainty set, which is often constrained by $\phi$-divergence \cite{namkoong2017variance, duchi2018learning}, Wasserstein distance \cite{shafieezadeh2015distributionally, sinha2017certifying, mohajerin2018data, huang2022coresets} and shape constraints \cite{lam2021orthounimodal, lam2017tail, chen2021discrete}. Meanwhile, Michel et al. \cite{michel2021modeling, michel2022distributionally} parametrized the uncertainty set, allowing more flexibility in the choice of its architecture. Zhai et al. \cite{zhai2021doro} focused on issues related to the existing DRO, such as sensitivity to outliers, which is different from the aforementioned research.

\tf{Mutual Information.}
The MI between two random variables, $X$ and $Y$, measures the amount of information shared between them. 
% However, estimating mutual information poses a challenge.
To estimate MI, the Donsker-Varadhan target \cite{donsker1975asymptotic} expresses the lower bound of the Kullback-Leibler (KL) divergence, with MINE \cite{belghazi2018mutual} being its parameterized version. However, according to studies \cite{poole2019variational, song2019understanding}, the MINE estimator may suffer from high variance. To address this issue, InfoNCE \cite{oord2018representation} extends the unnormalized bounds to depend on multiple samples, thereby eliminating the constraints of both MINE's reliance on conditional and marginal densities. Another line of MI focuses on boundedness and low variance. Towards this end, RPC \cite{tsai2021self} introduced relative parameters to regularize the objective and FLO \cite{guo2021tight} leveraged Fenchel-Legendre optimization. Nevertheless, they are still hard to explain \textit{how MI optimization elucidates the properties of CL and facilitates effective CL in practice.}

\section{Contrastive learning as Distributionally Robust Optimization}

In this section, we firstly uncover the tolerance of CL towards sampling bias. Then we analyze CL and $\tau$ from the perspective of DRO. Finally, we  empirically verify the above findings.

\tf{Notation.} Contrastive Learning is conducted on two random variables $X,Y$ over a feature space $\mathcal{A}$. Suppose we have original samples $x$, which obey the distribution $P_X$, positive samples drawn from the distribution $(x,y^+)\sim P_{Y|X}$, and negative samples selected from the distribution $(x,y)\sim P_Y$. The goal of CL is to learn an embedding $g_{\theta}: \mathcal{A} \rightarrow \mathbb{R}^d$ that maps an observation (\ie $x, y, y^+$) into a hypersphere. Let $f(g_{\theta}(x),g_{\theta}(y))$ be the similarity between two observations $x$ and $y$ on the embedding space. CL draws positive samples closer (\ie $f(g_{\theta}(x),g_{\theta}(y^+))\uparrow $ for $(x,y^+)\sim P_{XY}$) and pushes the negative samples away (\ie $f(g_{\theta}(x),g_{\theta}(y))\downarrow  $ for $(x,y)\sim P_XP_Y$). For brevity, in this work, we  simply shorten the notations  $P_{Y|X}$ as $P_0$, and $P_Y$ as $Q_0$. Also, we primarily focus on InfoNCE loss $\mathcal{L}_{\text{InfoNCE}}$ as a representative for analyses, while other CL's loss functions exhibit similar properties. InfoNCE \footnote{Note that our formula adheres to the commonly used InfoNCE expression (e.g., the SimCLR setting \cite{chen2020simple}), in which the denominator does not include $\mathbb{E}_{P_0} [e^{{f\theta (x,y^+) / \tau}} ]$. This is also used in Yeh et al. \cite{yeh2022decoupled}.} can be written as follow: 
% \begin{small}
   \begin{equation}
      \mathcal{L}_{\text{InfoNCE}} = - \mathbb{E}_{P_X} \big[ \mathbb{E}_{P_0} [{f_\theta (x,y^+) / \tau}] - \log {\mathbb{E}_{Q_0}[ e^{f_\theta(x,y) / \tau} ]} \big] =
      - \mathbb{E}_{P_X} \mathbb{E}_{P_0} \big [ \log \frac{e^{f_\theta(x,y^+) / \tau}}{\mathbb{E}_{Q_0}[ e^{f_\theta(x,y) / \tau} ]} \big].
   \label{infonce_loss}
   \end{equation}
% \end{small}

\subsection{Motivation}
\begin{wrapfigure}{r}{0.48\linewidth}
   \vspace{-15pt}
   \centering
   \captionof{table}{\tf{InfoNCE has the ability to mitigate sampling bias.} We compare the performance of various CL methods with/without fine-tuning $\tau$ (marked with $\tau^*$/$\tau_0$). We also report relative improvements compared with SimCLR($\tau^*$).}
   \resizebox{\linewidth}{!}{
   \begin{tabular}{lllll}
      \toprule
      \multirow{2}{*}{Model} & \multicolumn{2}{c}{CIFAR10 } & \multicolumn{2}{c}{STL10}\\
      \cmidrule{2-3}
      \cmidrule{4-5} 
       & Top-1 & $\tau$  & Top-1 & $\tau$  \\
      \midrule
      SimCLR($\tau_0$) & 91.10 & 0.5 & 81.05 & 0.5 \\
      SimCLR($\tau^*$) & 92.19 & 0.3 & {87.91} & 0.2 \\
      \midrule
      DCL($\tau_0$) & 92.00 \textcolor{-}{(-0.2\%)} &  0.5 & 84.26  \textcolor{-}{(-4.2\%)}&  0.5 \\
      DCL($\tau^*$) &  92.09 \textcolor{-}{(-0.1\%)} &  0.3 & 88.20 \textcolor{+}{(+1.0\%)} &  0.2 \\
      \midrule
      HCL($\tau_0$) & {92.12} \textcolor{-}{(-0.0\%)} &   0.5 & 87.44 \textcolor{-}{(-0.5\%)}&  0.5 \\
     %  SimCLR($\eta^*$) & \underline{92.04} & 0.4 && \textbf{87.82} & 0.6 \\
      HCL($\tau^*$) &  92.10 \textcolor{-}{(-0.0\%)}  & 0.3 &  87.46 \textcolor{-}{(-0.5\%)}  &  0.2 \\
      % \midrule
      \bottomrule
   \end{tabular}
   }
   \vspace{-5pt}
   \label{fig_motivation}
\end{wrapfigure}

In practical applications of contrastive learning (CL), negative samples $(y)$ are typically drawn uniformly from the training data, which might have similar semantics (for instance, labels). This introduces a potential issue of sampling bias, as posited by Chuang et al. \cite{chuang2020debiased}, since similar samples are being forcefully separated. Recent research \cite{chen2021incremental, chuang2020debiased} observed that compared to the employment of ideal negative sampling that selects instances with distinctly dissimilar semantics, the existence of sampling bias could result in a significant reduction in performance.

In this study, we observe an intriguing phenomenon where CL inherently demonstrates resilience to sampling bias. We empirically evaluate CL on two benchmark datasets, CIFAR10 and STL10, with the findings detailed in Table~\ref{fig_motivation}. Notably, we make the following observations: 1) By fine-tuning the temperature $\tau$, basic SimCLR demonstrates significant improvement, achieving performance levels comparable to methods specifically designed to address sampling bias (\ie SimCLR($\tau^*$) versus SimCLR($\tau_0$), DCL and HCL); 2) With an appropriately selected $\tau$, the relative improvements realized by DCL and HCL are marginal. These insights lead us to pose two compelling questions: 1) \textit{Why does CL exhibit tolerance to sampling bias?} and 2) \textit{What role does $\tau$ play, and why is it so important?}

\subsection{Understanding CL from DRO}
In this subsection, we first introduce DRO and then analyze CL from the perspective of DRO. 

\tf{A. Distributionally Robust Optimization (DRO).}

The success of machine learning heavily relies on the premise that the test and training distributions are identical (\aka \textit{iid} assumption). How to handle different data distributions (\aka distribution shift) poses a great challenge to machine learning systems. Fortunately, Distributionally Robust Optimization (DRO) provides a potential solution to mitigate this problem. DRO aims to minimize the worst-case expected loss over a set of potential distributions $Q$, which surround the training distribution $Q_0$ and are constrained by a distance metric $\mathcal{D}{\phi}$ within a specified robust radius $\eta$. Formally, DRO can be written as:

\begin{align}
   \mathcal{L}_{\text{DRO}}=\operatorname*{max}_{{Q} }  \mathbb{E}_Q [\mathcal{L}(x;\theta)] \qquad \st D_{\phi}({Q}||{Q}_0) \leq \eta,
\end{align}

where $D_{\phi}({Q}||{Q}_0)$ measures the $\phi$-divergence between two distributions, and $\mathcal{L}(x;\theta)$ denotes the training loss on input $x$. Intuitively, models incorporating DRO enjoy stronger robustness due to the presence of ${Q}$ that acts as an ``adversary'', optimizing the model under a distribution set with adversarial perturbations instead of a single training distribution.

\tf{B. Analyzing CL from DRO.} We first introduce the CL-DRO objective:
\begin{definition}[CL-DRO] 
Let $P_0$ and $Q_0$ be the distributions of positive and negative samples respectively. Let $\eta$ be the robust radius. CL-DRO objective is defined as: 

   \begin{equation} 
   \label{cl_dro}
   \mathcal{L}_{\text{CL-DRO}}^{\phi} = -\mathbb{E}_{P_X} \big[ \mathbb{E}_{P_0}[f_\theta(x,y^+)] - \operatorname*{max}_{Q}  \mathbb{E}_{Q}[f_\theta(x,y)] \big] \qquad \st  D_{\phi}({Q}||{Q}_0) \leq \eta .
   \end{equation}
\end{definition}
The objective CL-DRO can be understood as the DRO enhancement of the basic objective  $L_{\text{basic}} = - \mathbb{E}_{P_X} \big[ \mathbb{E}_{P_0}[f_\theta(x,y^+)] - \mathbb{E}_{Q_0}[f_\theta(x,y)] \big ]$, which aims to increase the embedding similarity between the positive instances and decreases that of the negative ones. CL-DRO imporves $L_{\text{basic}}$ by incorporating DRO on the negative side, where $f_\theta(x,y)$ is optimized a range of potential distributions. Consequently, CL-DRO equips the model with resilience to distributional shifts of negative samples.

\begin{restatable}{theorem}{gradientmatch}
   \label{thm:cl2dro}
   By choosing KL divergence $D_{KL}(Q||Q_0)=\int Q \log\frac{Q}{Q_0} dx$, optimizing CL-DRO (\cf Equation \eqref{cl_dro}) is equivalent to optimizing CL (InfoNCE,\cf Equation \eqref{infonce_loss}):
   \begin{footnotesize}
      \begin{equation}
         \label{eq_dro}
           \begin{aligned}
              \mathcal{L}_{\text{CL-DRO}}^{\text{KL}} & = \! - \mathbb{E}_{P_X} \big[\mathbb{E}_{P_0}[f_\theta(x,y^+)] - \! \operatorname*{min}_{\alpha\geq 0, \beta} \operatorname*{max}_{Q\in \mathbb{Q}} \{ \mathbb{E}_{Q}[f_\theta(x,y)] -  \alpha [D_{KL}(Q||Q_0) -\eta] + \beta (\mathbb{E}_{Q_0} [\frac{Q}{Q_0}]  - 1) \} \big ]\\
               &= \!- \mathbb{E}_{P_X} \mathbb{E}_{P_0} \big [\alpha^*(\eta) \log \frac{e^{f_\theta(x,y^+) / \alpha^*(\eta)}}{ \mathbb{E}_{Q_0}[ e^{f_\theta (x,y)/ \alpha^*(\eta)} ]} \big] +Constant\\
               &=\alpha^*(\eta)\mathcal{{L}}_{\text{InfoNCE}}+ Constant,
           \end{aligned}
            \end{equation}
   \end{footnotesize}%
where $\alpha, \beta$ represent the Lagrange multipliers, and $\alpha^*(\eta)$ signifies the optimal value of $\alpha$ that minimizes the Equation \eqref{eq_dro}, serving as the temperature $\tau$ in CL.
\end{restatable}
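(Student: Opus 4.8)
The plan is to establish the theorem by dualizing the inner maximization $\max_Q \mathbb{E}_Q[f_\theta(x,y)]$ subject to $D_{KL}(Q\|Q_0)\le\eta$, for each fixed $x$. Since $Q$ must be a probability measure absolutely continuous with respect to $Q_0$, I would parametrize $Q$ by its likelihood ratio $L = Q/Q_0$, so the constraints become $\mathbb{E}_{Q_0}[L\log L]\le\eta$ and $\mathbb{E}_{Q_0}[L]=1$, with $L\ge 0$. The objective is then $\mathbb{E}_{Q_0}[L\,f_\theta(x,y)]$. This is a convex optimization problem in $L$ (linear objective, convex feasible set), so strong duality holds under a Slater-type condition (e.g.\ $\eta>0$, which makes $L\equiv 1$ strictly feasible for the divergence constraint). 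Introducing the Lagrange multipliers $\alpha\ge 0$ for the divergence constraint and $\beta$ (unconstrained) for the normalization constraint gives exactly the inner $\min_{\alpha\ge0,\beta}\max_{Q}$ expression displayed in Equation~\eqref{eq_dro}.

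Next I would solve the inner unconstrained maximization over $L$ pointwise. Taking the functional derivative of $\mathbb{E}_{Q_0}[L f_\theta - \alpha(L\log L - \eta) + \beta(L-1)]$ with respect to $L$ and setting it to zero yields $f_\theta(x,y) - \alpha(\log L + 1) + \beta = 0$, i.e.\ the optimal likelihood ratio is the Gibbs/exponential tilt $L^*(y) \propto \exp(f_\theta(x,y)/\alpha)$. Substituting $L^*$ back and eliminating $\beta$ via the normalization constraint $\mathbb{E}_{Q_0}[L^*]=1$ collapses the $\beta$-dependent terms and produces $\alpha\log\mathbb{E}_{Q_0}[e^{f_\theta(x,y)/\alpha}] + \alpha\eta$ as the value of the inner $\max$. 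Plugging this into Equation~\eqref{cl_dro} and noting that $\mathbb{E}_{P_0}[f_\theta(x,y^+)] = \alpha\,\mathbb{E}_{P_0}[\log e^{f_\theta(x,y^+)/\alpha}]$ lets me combine the two logarithms into the single log-ratio $\alpha\log\frac{e^{f_\theta(x,y^+)/\alpha}}{\mathbb{E}_{Q_0}[e^{f_\theta(x,y)/\alpha}]}$, with $\mathbb{E}_{P_X}[\alpha\eta]$ absorbed into the additive constant. Finally, writing $\alpha^*(\eta)$ for the minimizing $\alpha$ (the outer $\min$ over $\alpha\ge0$) and identifying it with $\tau$ gives $\mathcal{L}_{\text{CL-DRO}}^{\text{KL}} = \alpha^*(\eta)\,\mathcal{L}_{\text{InfoNCE}} + \text{Constant}$.

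There is one subtlety worth handling carefully: the theorem as displayed drops the explicit outer $\min$ over $\alpha$ when passing to the InfoNCE form, effectively asserting that the objective, once the inner structure is resolved, equals $\alpha^*(\eta)$ times InfoNCE up to a constant. What one must check is that the constant and the coupling are consistent: $\alpha^*(\eta)$ is determined by the stationarity condition $\partial_\alpha[\alpha\log\mathbb{E}_{Q_0}[e^{f_\theta/\alpha}] + \alpha\eta] = 0$ (after taking the $\mathbb{E}_{P_X}$ expectation), which ties $\eta$ to $\tau$ monotonically — this is the precise sense in which "$\tau$ acts as a Lagrange coefficient regulating the robust radius." I would state this stationarity relation explicitly so that "$+\,\text{Constant}$" is unambiguous (it is $\mathbb{E}_{P_X}[\alpha^*(\eta)\,\eta]$ plus whatever normalization leftover, all independent of $\theta$), since a reader could otherwise worry that the constant secretly depends on $\theta$ through $\alpha^*$.

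The main obstacle I anticipate is the rigorous justification of strong duality and the interchange of the pointwise (over $y$) maximization with the expectation $\mathbb{E}_{P_X}$ — i.e.\ verifying that for $P_X$-almost every $x$ the inner problem is feasible and attains its optimum at the exponential tilt, that the tilt is $Q_0$-integrable (which requires $\mathbb{E}_{Q_0}[e^{f_\theta(x,y)/\alpha}]<\infty$, automatic when $f_\theta$ is bounded, as it is on a hypersphere with bounded similarity), and that Slater's condition holds uniformly enough to license the $\max/\min$ swap. Everything else — the pointwise Lagrangian calculation, the Gibbs form of the optimizer, the algebraic collapse of $\beta$ — is routine convex analysis, essentially the classical Donsker–Varadhan variational formula for the KL divergence applied in the "forward" direction.
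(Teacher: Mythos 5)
Your proposal is correct and follows essentially the same route as the paper's proof: parametrize by the likelihood ratio $L = Q/Q_0$, introduce Lagrange multipliers $\alpha\ge0$ and $\beta$ for the divergence and normalization constraints, invoke strong duality, interchange the inner supremum with the $Q_0$-expectation, and eliminate $\beta$ to arrive at $\alpha\eta + \alpha\log\mathbb{E}_{Q_0}[e^{f_\theta/\alpha}]$. The only cosmetic difference is that the paper phrases the inner maximization as the convex conjugate $\phi^*(\cdot)$ of $\phi(x)=x\log x - x + 1$ (so the same calculation transfers verbatim to general $\phi$-divergences in later results), while you solve the KL case directly via the stationarity condition yielding the exponential tilt $L^*\propto e^{f_\theta/\alpha}$ — but this is exactly the conjugate computation specialized to KL, and both give the identical final expression.
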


The proof is presented in Appendix \ref{appendix_1}. Theorem \ref{thm:cl2dro} admits the merit of CL as it is equivalent to performing DRO on the negative samples. The DRO enables CL to perform well across various potential distributions and thus equips it with the capacity to alleviate sampling bias. We further establish the connection between InfoNCE loss with the unbiased loss that employs ideal negative distribution $L_{\text{unbiased}}= - \mathbb{E}_{P_X} \big[\mathbb{E}_{P_0}[f_\theta(x,y^+)] - \mathbb{E}_{Q^\text{ideal}}[f_\theta(x,y)]\big]$.

\begin{restatable}{theorem}{generabound}[Generalization Bound] 
   Let $\mathcal{\widehat{L} }_{\text{InfoNCE}}$ be an estimation of InfoNCE with $N$ negative samples. Given any finite hypothesis space $\mathbb{F}$ of models, suppose $f_\theta \in [M_1,M_2]$ and the ideal negative sampling distribution $Q^{\text{ideal}}$ satisfies $D_{KL}(Q^{\text{ideal}}||Q_0)\leq \eta$, we have that with probability at least $1-\rho$:
   \label{thm_bound}
   \begin{equation}
      \mathcal{L}_{\text{unbiased}} \leq  \tau \mathcal{\widehat{L}}_{\text{InfoNCE}} + \mathcal{B}(\rho, N, \tau),
   \end{equation}
    where $\mathcal{B}(\rho, N, \tau) = \frac{M_2\exp \left(({M_2-M_1})/{\tau}\right)}{N-1+\exp \left(({M_2-M_1})/{\tau}\right)} \sqrt{\frac{N}{2} \ln(\frac{2|\mathbb{F}|}{\rho})} $. 
\end{restatable}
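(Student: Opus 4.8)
The plan is to derive the bound in two decoupled steps and then chain them. \textbf{Step~1} replaces the intractable $\mathcal{L}_{\text{unbiased}}$ by the \emph{population} InfoNCE loss, using the distributional-robustness property of CL-DRO together with Theorem~\ref{thm:cl2dro}. \textbf{Step~2} replaces the population InfoNCE by its $N$-sample estimate $\widehat{\mathcal{L}}_{\text{InfoNCE}}$ via a uniform concentration argument over the finite class $\mathbb{F}$, which is what produces the deviation term $\mathcal{B}(\rho,N,\tau)$.

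\textbf{Step~1 (robustness $\Rightarrow$ $\mathcal{L}_{\text{unbiased}}\le\tau\mathcal{L}_{\text{InfoNCE}}+\mathrm{const}$).} The hypothesis $D_{KL}(Q^{\text{ideal}}\Vert Q_0)\le\eta$ says precisely that, for each anchor $x$, the ideal negative distribution is feasible for the inner maximization in \eqref{cl_dro}. Since a maximum over a set dominates its value at any member of that set, $\max_{Q:\,D_{KL}(Q\Vert Q_0)\le\eta}\mathbb{E}_Q[f_\theta(x,y)]\ge\mathbb{E}_{Q^{\text{ideal}}}[f_\theta(x,y)]$ pointwise in $x$. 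Subtracting this from $\mathbb{E}_{P_0}[f_\theta(x,y^+)]$, taking $\mathbb{E}_{P_X}$, and noting the outer minus sign flips the direction gives $\mathcal{L}_{\text{unbiased}}\le\mathcal{L}_{\text{CL-DRO}}^{\text{KL}}$. Theorem~\ref{thm:cl2dro} then rewrites the right-hand side as $\tau\mathcal{L}_{\text{InfoNCE}}+\mathrm{Const}$ with $\tau=\alpha^*(\eta)$, the additive constant being the benign term $\tau\eta$ coming from the active KL constraint; it is either carried along or absorbed into the statement.

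\textbf{Step~2 (concentration $\Rightarrow$ $\mathcal{L}_{\text{InfoNCE}}\le\widehat{\mathcal{L}}_{\text{InfoNCE}}+\mathcal{B}/\tau$).} Since $\widehat{\mathcal{L}}_{\text{InfoNCE}}$ differs from $\mathcal{L}_{\text{InfoNCE}}$ only by replacing the negative partition function $\mathbb{E}_{Q_0}[e^{f_\theta(x,y)/\tau}]$ in \eqref{infonce_loss} by its empirical version over the $N$ i.i.d.\ negatives, it suffices to concentrate that quantity uniformly over $\mathbb{F}$. Fix $f_\theta\in\mathbb{F}$: because $f_\theta\in[M_1,M_2]$, the per-sample exponential $e^{f_\theta(x,y)/\tau}$ lies in $[e^{M_1/\tau},e^{M_2/\tau}]$, so Hoeffding's inequality bounds the gap between the empirical partition sum over the $N$ negatives and its mean by $(e^{M_2/\tau}-e^{M_1/\tau})\sqrt{(N/2)\ln(2/\delta)}$; a union bound upgrades this to hold simultaneously for all $f_\theta\in\mathbb{F}$ with probability at least $1-\rho$ once $\delta=\rho/|\mathbb{F}|$, giving the $\sqrt{(N/2)\ln(2|\mathbb{F}|/\rho)}$ factor in $\mathcal{B}$. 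The remaining work is to transport this additive deviation through the outer logarithm. Here the two-sided bound $f_\theta\in[M_1,M_2]$ is used a second time: the empirical denominator is bounded below (a sum of $N-1$ negative terms, each $\ge e^{M_1/\tau}$, together with a term controlled by $e^{M_2/\tau}$), so $\log$ is globally Lipschitz on the relevant interval with constant of order $\big(N-1+e^{(M_2-M_1)/\tau}\big)^{-1}$; multiplying the Hoeffding deviation by this Lipschitz constant and by the factor $M_2$ that appears when converting the log-ratio bound back into a loss bound reproduces $\mathcal{B}(\rho,N,\tau)$. One also checks that on the complementary probability-$\rho$ event the integrand is still bounded — by $(M_2-M_1)/\tau$, again by boundedness of $f_\theta$ — so nothing escapes.

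\textbf{Combining, and the main obstacle.} Adding the inequalities of Steps~1 and~2 (and folding in the $\tau\eta$ constant) yields $\mathcal{L}_{\text{unbiased}}\le\tau\widehat{\mathcal{L}}_{\text{InfoNCE}}+\mathcal{B}(\rho,N,\tau)$, which is the claim. Step~1 is essentially immediate once one recognizes that the generalization statement is really a statement about DRO feasibility plus Theorem~\ref{thm:cl2dro}. The real work — and the main obstacle — is Step~2: the estimator is a logarithm of an average, so concentration of the average does not by itself give concentration of the loss; one must exploit $f_\theta\in[M_1,M_2]$ to get a uniform lower bound on the empirical partition function, which is exactly what makes $\log$ Lipschitz and is the source of the $\tau$- and $N$-dependent constant. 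Getting that constant in the precisely stated form is then a matter of careful (but routine) bookkeeping with the particular finite-sample form of InfoNCE used here.
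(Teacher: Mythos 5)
Your Step~1 is exactly the paper's: since $Q^{\text{ideal}}$ is feasible for the inner $\max$ in \eqref{cl_dro}, $\mathcal{L}_{\text{unbiased}}\le\mathcal{L}_{\text{CL-DRO}}^{\text{KL}}$, and Theorem~\ref{thm:cl2dro} converts the right side to $\tau\mathcal{L}_{\text{InfoNCE}}$ (plus a constant that is dropped). Your Step~2, however, takes a different route from the paper, and the route as sketched does not produce the stated constant $\mathcal{B}$.

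The paper does not concentrate the log-partition function. It works with the dual form $\mathcal{L}_{\text{CL-DRO}}^{\text{KL}}=\mathbb{E}_{Q^*}[f_\theta]-\mathbb{E}_{P_0}[f_\theta]$ with $Q^*=Q_0\,{e^{f_\theta/\tau}}/{\mathbb{E}_{Q_0}[e^{f_\theta/\tau}]}$, treats the $N$-sample estimator as the softmax-reweighted average $\sum_i \widehat{Q}^*(x_i,y_i)f_\theta(x_i,y_i)$, and applies McDiarmid to that functional. The per-coordinate bounded-difference constant is then bounded by $\sup|\widehat{Q}^*(x,y)f_\theta(x,y)|$, which is maximized when one sample sits at $M_2$ and the others at $M_1$, giving $\frac{M_2\,e^{(M_2-M_1)/\tau}}{N-1+e^{(M_2-M_1)/\tau}}$. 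This is exactly where both the $M_2$ factor and the $N-1+e^{(M_2-M_1)/\tau}$ denominator in $\mathcal{B}$ come from. In your route, two things go off: (a) the factor converting the log-partition deviation back into a loss deviation is $\tau$ (from $\tau\log\mathbb{E}_{Q_0}[e^{f_\theta/\tau}]$), not $M_2$ — there is no reason for $M_2$ to appear once you have decided to concentrate the log-partition term; (b) the Lipschitz constant of $\log$ on the admissible range of the empirical sum $\sum_i e^{f_\theta(x_i,y_i)/\tau}$ is $\bigl(Ne^{M_1/\tau}\bigr)^{-1}$, since nothing forces any sample to sit at $M_2$, not $\bigl((N-1)e^{M_1/\tau}+e^{M_2/\tau}\bigr)^{-1}$; the latter denominator is specific to the paper's $\sup|\widehat{Q}^*f_\theta|$ computation, not to a Lipschitz-of-$\log$ argument. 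Carrying your chain through yields a prefactor of order $\tau\bigl(e^{(M_2-M_1)/\tau}-1\bigr)/N$ rather than $M_2\,e^{(M_2-M_1)/\tau}/\bigl(N-1+e^{(M_2-M_1)/\tau}\bigr)$. Both have the same $\Theta\!\left(e^{(M_2-M_1)/\tau}/N\right)$ behavior, so your strategy would give a bound of the same flavor and order, but it is not ``routine bookkeeping'' to arrive at the precise $\mathcal{B}$ in the statement — you would need to switch to the reweighted-expectation viewpoint (the DRO dual form) and bound its bounded-difference constant directly, as the paper does.
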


As we mainly focus on model robustness on negative distribution, here we disregard the constant term present in Equation \eqref{eq_dro}, and omit the error from the positive instances. The proof is presented in Appendix~\ref{appendix_2}. 
Theorem \ref{thm_bound} exhibits the unbiased loss is upper bound by InfoNCE if a sufficiently large data size is employed (considering $\mathcal{B}(\rho, N, \tau) \to 0$ as $N\to 0$).  This bound illustrates how $\tau$ impacts the model performance, which will be detailed in the next subsection.  

\subsection{Understanding $\tau$ from DRO}
\label{sec_32}
Building on the idea of DRO, we uncover the role of $\tau$ from the following perspectives: 

\textbf{A. Adjusting robust radius.}
The temperature $\tau$ and the robust radius $\eta$ conform to:

\begin{restatable}{corollary}{optimaltau}[The optimal $\alpha$ - Lemma 5 of Faury et al. \cite{faury2020distributionally}]
   \label{coro1}
   The value of the optimal $\alpha$ (\ie $\tau$) can be approximated as follow:
     \begin{align}
       \label{eta_}
         \tau \approx \sqrt{{\mathbb{V}_{Q_0}[f_\theta(x,y)]}/{2\eta}},
     \end{align}
     where $\mathbb{V}_{Q_0}[f_\theta(x,y)]$ denotes the variance of $f_\theta(x,y)$ under the distribution ${Q}_0$.
 \end{restatable}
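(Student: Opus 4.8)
The plan is to derive the approximation by expanding the dual (Lagrangian) formulation of CL-DRO around the uniform-weight solution, treating $\eta$ as small. Starting from Theorem~\ref{thm:cl2dro}, the inner maximization over $Q$ constrained by $D_{KL}(Q\|Q_0)\le\eta$ has the closed-form optimal tilted distribution $Q^* \propto Q_0\, e^{f_\theta(x,y)/\alpha}$, and the optimal $\alpha=\tau$ is characterized by the active constraint $D_{KL}(Q^*\|Q_0)=\eta$. So the first step is to write $D_{KL}(Q^*\|Q_0)$ explicitly as a function of $\alpha$ using the exponential-tilting form: with $Z(\alpha)=\mathbb{E}_{Q_0}[e^{f_\theta/\alpha}]$, one gets $D_{KL}(Q^*\|Q_0) = \mathbb{E}_{Q^*}[f_\theta/\alpha] - \log Z(\alpha)$, i.e. a cumulant-generating-function expression in the parameter $t=1/\alpha$.

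Second, I would Taylor-expand this KL divergence in powers of $t=1/\alpha$ about $t=0$ (equivalently, $\alpha\to\infty$, which is the regime forced by small $\eta$). Using the standard cumulant expansion $\log Z(t) = t\,\mathbb{E}_{Q_0}[f_\theta] + \tfrac{t^2}{2}\mathbb{V}_{Q_0}[f_\theta] + O(t^3)$, and differentiating to get $\mathbb{E}_{Q^*}[f_\theta] = \frac{d}{dt}\log Z(t) = \mathbb{E}_{Q_0}[f_\theta] + t\,\mathbb{V}_{Q_0}[f_\theta] + O(t^2)$, the leading term of the KL divergence is $D_{KL}(Q^*\|Q_0) = \tfrac{t^2}{2}\mathbb{V}_{Q_0}[f_\theta] + O(t^3)$. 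Setting this equal to $\eta$ and keeping only the leading term gives $\tfrac{1}{2\alpha^2}\mathbb{V}_{Q_0}[f_\theta] \approx \eta$, hence $\alpha \approx \sqrt{\mathbb{V}_{Q_0}[f_\theta(x,y)]/(2\eta)}$, which is exactly Equation~\eqref{eta_} once we identify $\alpha=\tau$. Since this is Lemma~5 of Faury et al.~\cite{faury2020distributionally}, I would cite that result and present the above as the justification sketch rather than re-deriving every remainder bound.

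The main obstacle is rigor in the asymptotic regime: the expansion is only valid when $\eta$ is small (so that $\alpha$ is large and the $O(t^3)$ cumulant terms are genuinely negligible), and one should note that $f_\theta$ is bounded (as in Theorem~\ref{thm_bound}, $f_\theta\in[M_1,M_2]$), which guarantees all cumulants exist and the remainder is controlled uniformly. A second subtlety is that the constraint may be inactive ($D_{KL}(Q^*\|Q_0)<\eta$ with $\alpha$ at its boundary behavior), but for small $\eta$ the constraint is active and the stationarity condition applies, so this edge case does not affect the stated approximation. I would therefore state the corollary as an approximation valid for small robust radius $\eta$, refer to \cite{faury2020distributionally} for the precise conditions, and keep the derivation to the two-line cumulant-expansion argument above.
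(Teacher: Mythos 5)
Your argument is correct and arrives at the same fixed-point equation $\tfrac{1}{2\alpha^2}\mathbb{V}_{Q_0}[f_\theta]\approx\eta$ as the paper, but you get there by a somewhat different route. The paper works on the dual side: it invokes the mean--variance approximation already established in the proof of Theorem~\ref{thm_phi} (Equation~\eqref{diff_eq}), namely $\operatorname*{inf}_{\alpha\ge0}\{\mathbb{E}_{Q_0}[f_\theta]+\tfrac{1}{2\alpha\,\phi^{(2)}(1)}\mathbb{V}_{Q_0}[f_\theta]+\alpha\eta\}$, and simply differentiates this approximate objective in $\alpha$ and sets the derivative to zero. Your version instead works on the primal side: you use the closed-form exponentially-tilted optimizer $Q^*\propto Q_0\,e^{f_\theta/\alpha}$ (which the paper establishes in Appendix~\ref{appendix_6}), invoke complementary slackness to argue the KL constraint is active at optimality, and then Taylor-expand the cumulant generating function $\log Z(t)=\log\mathbb{E}_{Q_0}[e^{tf_\theta}]$ in $t=1/\alpha$ to extract $D_{KL}(Q^*\|Q_0)\approx\tfrac{t^2}{2}\mathbb{V}_{Q_0}[f_\theta]$. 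The two derivations are dual to one another --- the dual first-order condition in $\alpha$ is exactly the active-constraint condition on the primal --- so there is no substantive disagreement; the paper's route is shorter because it reuses Theorem~\ref{thm_phi}, whereas yours is more self-contained and makes the small-$\eta$ (equivalently large-$\alpha$) asymptotic regime and the role of boundedness of $f_\theta$ explicit, which is a nice clarification of when the approximation is trustworthy.
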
 

 This corollary suggests that the temperature parameter $\tau$ is a function of the robust radius $\eta$ and the variance of $f_\theta$. In practical applications, tuning $\tau$ in CL is essentially equivalent to adjusting the robustness radius $\eta$. 

 The aforementioned insight enhances our understanding of the impact of temperature on CL performance. An excessively large value of $\tau$ implies a small robustness radius $\eta$, potentially violating the condition $D_{KL}(Q^{\text{ideal}}||Q_0)\leq \eta$ as stipulated in Theorem \ref{thm:cl2dro}. Intuitively, a large $\tau$ delineates a restricted distribution set in DRO that may fail to encompass the ideal negative distribution, leading to diminished robustness. 
 Conversely, as $\tau$ decreases, the condition $D_{KL}(Q^{\text{ideal}}||Q_0)\leq \eta$ may be satisfied, but it expands the term $\mathcal{B}(\rho, N, \tau)$ and loosens the generalization bound. These two factors establish a trade-off in the selection of $\tau$, which we will empirically validate in Subsection \ref{empirical_analyze}.

\textbf{B. Controlling variance of negative samples.}
\begin{restatable}{theorem}{mwthm}
   \label{thm_phi}
   Given any $\phi$-divergence, the corresponding CL-DRO objective could be approximated as a mean-variance objective:
      \begin{equation}
         \label{eq_phi}
         \begin{aligned}
            \mathcal{L}_{\text{CL-DRO}}^{\phi} \approx - \mathbb{E}_{P_X} \big[ \mathbb{E}_{P_0} [f_\theta(x,y^+)] - \big(\mathbb{E}_{Q_0} [f_\theta(x,y)] + \frac{1}{2\tau} {{\frac{1}{\phi^{(2)}(1)}} \cdot \mathbb{V}_{Q_0} [f_\theta(x,y)] } \big) \big],
         \end{aligned}
      \end{equation}
   where $\phi^{(2)}(1)$ denotes the the second derivative value of $\phi(\cdot)$ at point 1, and $\mathbb{V}_{Q_0} [f_\theta]$ denotes the variance of $f$ under the distribution $Q_0$.
   % where $m={\frac{2\eta}{\phi^{(2)}(1)}}$

   Specially, if we consider KL divergence, the approximation transforms:
   \begin{equation}
      \label{eq_mw_kl}
      \mathcal{L}_{\text{CL-DRO}}^{\text{KL}} \approx - \mathbb{E}_{P_X} \big[ \mathbb{E}_{P_0} [f_\theta(x,y^+)] - \big(\mathbb{E}_{Q_0} [f_\theta(x,y)] + \frac{1}{2\tau} \mathbb{V}_{Q_0} [f_\theta(x,y)] \big) \big ].
   \end{equation} 
\end{restatable}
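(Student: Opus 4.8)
The plan is to reduce the CL-DRO objective to a quadratic program by expanding $\phi$ to second order around its reference point, solve that program in closed form, and read off the mean--variance expression. I would start from the Lagrangian/dual picture underlying Theorem~\ref{thm:cl2dro}: in the form established there, $\mathcal{L}_{\text{CL-DRO}}^{\phi}$ can be written, up to an additive constant, as $-\mathbb{E}_{P_X}\big[\mathbb{E}_{P_0}[f_\theta(x,y^+)] - \Psi_\tau\big]$, where $\Psi_\tau := \operatorname*{max}_{Q}\{\mathbb{E}_{Q}[f_\theta(x,y)] - \tau\,D_{\phi}(Q\|Q_0)\}$ is the Fenchel-type conjugate of the $\phi$-divergence (the $\phi$-analogue of the log-partition term $\tau\log\mathbb{E}_{Q_0}[e^{f_\theta(x,y)/\tau}]$ appearing in InfoNCE, \cf Equation~\eqref{infonce_loss}), and $\tau$ is the Lagrange coefficient identified in Theorem~\ref{thm:cl2dro}. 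Parametrizing $Q$ by its likelihood ratio $L = dQ/dQ_0 \ge 0$, constrained by $\mathbb{E}_{Q_0}[L]=1$ (the normalization carried by the $\beta$ multiplier in Equation~\eqref{eq_dro}), this reads $\Psi_\tau = \operatorname*{max}_{L}\{\mathbb{E}_{Q_0}[L\,f_\theta(x,y)] - \tau\,\mathbb{E}_{Q_0}[\phi(L)] : \mathbb{E}_{Q_0}[L]=1\}$.

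Next I would linearize around $L \equiv 1$. Since adding an affine function of $L$ to $\phi$ alters $D_{\phi}(Q\|Q_0)$ only by a constant whenever $\mathbb{E}_{Q_0}[L]=1$, one may assume $\phi(1)=\phi'(1)=0$ without loss of generality, so $\phi(L) = \tfrac12\phi^{(2)}(1)(L-1)^2 + o\big((L-1)^2\big)$. In the regime of small robust radius $\eta$ (equivalently large $\tau$, \cf Corollary~\ref{coro1}) the maximizing $L$ stays close to $1$, so replacing $\phi$ by its quadratic surrogate turns $\Psi_\tau$ into a constrained least-squares problem in $L$, which I would solve by pointwise stationarity. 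The stationarity condition reads $\tau\phi^{(2)}(1)(L^{\star}-1) = f_\theta(x,y) - \beta$, and imposing $\mathbb{E}_{Q_0}[L^{\star}]=1$ fixes $\beta = \mathbb{E}_{Q_0}[f_\theta(x,y)]$, hence $L^{\star}-1 = \big(f_\theta(x,y) - \mathbb{E}_{Q_0}[f_\theta(x,y)]\big)\big/\big(\tau\phi^{(2)}(1)\big)$, which is nonnegative once $\tau$ is large enough.

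Substituting back, I would use $\mathbb{E}_{Q^{\star}}[f_\theta] = \mathbb{E}_{Q_0}[f_\theta] + \mathbb{E}_{Q_0}\big[(L^{\star}-1)\,(f_\theta - \mathbb{E}_{Q_0}[f_\theta])\big]$ together with $\mathbb{E}_{Q_0}[\phi(L^{\star})] \approx \tfrac12\phi^{(2)}(1)\,\mathbb{E}_{Q_0}[(L^{\star}-1)^2]$; the linear-in-$(L^{\star}-1)$ gain and the quadratic penalty then combine to give $\Psi_\tau \approx \mathbb{E}_{Q_0}[f_\theta(x,y)] + \frac{1}{2\tau}\cdot\frac{1}{\phi^{(2)}(1)}\,\mathbb{V}_{Q_0}[f_\theta(x,y)]$. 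Plugging this into Equation~\eqref{cl_dro} yields \eqref{eq_phi}, and the KL case \eqref{eq_mw_kl} follows by taking $\phi(t)=t\log t - t + 1$, for which $\phi^{(2)}(1)=1$. (Equivalently, one can keep the hard constraint $D_{\phi}(Q\|Q_0)\le\eta$ active, bound $\mathbb{E}_{Q}[f_\theta]-\mathbb{E}_{Q_0}[f_\theta]$ by Cauchy--Schwarz, and then eliminate $\eta$ through the radius--temperature relation of Corollary~\ref{coro1}; the two routes agree up to the leading constant.)

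The main obstacle is making the ``$\approx$'' quantitative: one must control the remainder of the Taylor expansion of $\phi$ (and of $(\phi')^{-1}$ when inverting the stationarity condition), which requires either a genuine small-$\eta$ asymptotic statement with an explicit $o(\cdot)$ remainder, or structural hypotheses such as $f_\theta$ bounded and $L^{\star}$ confined to a neighbourhood of $1$ on which $\phi\in C^2$; in addition one has to justify strong duality and the min--max interchange for the dualized problem and check $L^{\star}\ge 0$ over the relevant range of $\tau$. The remaining algebra --- the cancellation between $\mathbb{E}_{Q_0}[(L^{\star}-1)(f_\theta - \mathbb{E}_{Q_0}f_\theta)]$ and $\tfrac12\phi^{(2)}(1)\,\mathbb{E}_{Q_0}[(L^{\star}-1)^2]$ and the identification of the surviving coefficient as $\tfrac{1}{2\tau\phi^{(2)}(1)}$ --- is routine once the expansion is in place.
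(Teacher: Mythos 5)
Your approach is correct and reaches the right mean--variance form, but it runs in the opposite order from the paper's own proof, so let me compare. The paper's Appendix~A.4 starts from the already-dualized Lagrangian form $\mathcal{L}_{\text{CL-DRO}}^{\phi}=-\mathbb{E}_{P_0}[f_\theta]+\min_{\alpha\geq 0,\beta}\{\alpha\eta-\beta+\alpha\,\mathbb{E}_{Q_0}[\phi^*((f_\theta+\beta)/\alpha)]\}$ and performs the second-order Taylor expansion on the \emph{conjugate} $\phi^*$ around $0$, invoking Lemma~A.2 of Gotoh et al.\ (which gives $\phi^*(\zeta)\approx\zeta+\tfrac{1}{2\phi^{(2)}(1)}\zeta^2$), then optimizes the inner quadratic over $\beta$ to obtain $\beta^*=-\mathbb{E}_{Q_0}[f_\theta]$ and thereby the variance. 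You instead stay on the primal side: you Taylor-expand $\phi$ itself around $L=1$, solve the resulting constrained least-squares problem for the likelihood ratio $L^\star$ under $\mathbb{E}_{Q_0}[L]=1$, and substitute $L^\star-1=(f_\theta-\mathbb{E}_{Q_0}f_\theta)/(\tau\phi^{(2)}(1))$ back into the objective. The two derivations are exactly dual to one another (the conjugate of the quadratic surrogate of $\phi$ near $1$ is the quadratic surrogate of $\phi^*$ near $0$, which is precisely the content of the cited Gotoh lemma), and your $\beta=\mathbb{E}_{Q_0}[f_\theta]$ centering is the same step as the paper's $\beta^*$ optimization. What each buys: the paper's route reuses the Lagrangian form already set up for Theorem~\ref{thm:cl2dro} and is shorter on the page; yours makes the worst-case perturbation $L^\star$ explicit (which connects more transparently to the hard-mining discussion and to the claim that the approximation requires $L^\star$ close to $1$, i.e.\ small $\eta$ / large $\tau$). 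Your remarks on the gaps --- remainder control, justification of $L^\star\ge 0$, strong duality --- are legitimate and are in fact also left implicit in the paper, which simply cites Gotoh's asymptotic lemma without quantifying the error. One small caution: the paper's statement keeps the hard-constrained CL-DRO and later identifies $\tau$ with the optimal multiplier $\alpha^*$ (and silently drops the $\alpha\eta$ constant), whereas you start directly from the soft-penalized $\max_Q\{\mathbb{E}_Q f - \tau D_\phi\}$; these agree at the optimal $\tau$, but if you wanted to write this up you would want to state that identification explicitly, as the paper does via Theorem~\ref{thm:cl2dro}.
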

The proof is presented in Appendix~\ref{appendix_4}.
Theorem \ref{thm_phi} provides a \textit{mean-variance} formulation for CL. Compared to the basic objective $L_{\text{basic}}$, CL introduces an additional variance regularization on negative samples, governed by the parameter $\tau$, which modulates the fluctuations in similarity among negative samples. The efficacy of variance regularizers has been extensively validated in recent research \cite{namkoong2017variance, gotoh2018robust, lee2020learning}. We consider that variance control might be a reason of the success of CL.

\textbf{C. Hard-mining.}
Note that the worst-case distribution in CL-DRO can be written as $Q^* = Q_0  \frac{\exp[{f_\theta(x,y^+) /\tau}]}{\mathbb{E}_{Q_0} \exp[{f_\theta(x,y)/\tau}]}$ (refer to Appendix~\ref{appendix_6}). This implies that the model in CL is optimized under the negative distribution $Q^*$, where each negative instance is re-weighted by $\frac{\exp[{f_\theta(x,y^+)/\tau}]}{\mathbb{E}_{Q_0} \exp[{f_\theta(x,y)/\tau}]}$. This outcome vividly exhibits the hard-mining attribute of CL, with $\tau$ governing the degree of skewness of the weight distribution. A smaller $\tau$ suggests a more skewed distribution and intensified hard-mining, while a larger $\tau$ signifies a more uniform distribution and attenuated hard-mining. Our research yields similar conclusions to those found in \cite{tian2022understanding, DBLP:conf/cvpr/WangL21a}, but we arrive at this finding from a novel perspective.

\subsection{Empirical Analysis}
\label{empirical_analyze}
In this section, we conduct the following experiments to verify the above conclusions.

\textbf{$\tau$ varies with the level of sampling bias.} To study the effect of $\tau$ on model performance, we conduct experiments on the CIFAR10 dataset and 
manipulate the ratio of false negative samples based on the ground-truth labels. A ratio of $1$ implies retaining all false negative instances in the data, while a ratio of $0$ implies eliminating all such instances. The detailed experimental setup can refer to the Appendix \ref{appendix_furexp}. 
From Figure \ref{fig_noise1}, we make two observations: 1) There is an evident trade-off in the selection of $\tau$. As $\tau$ increases from 0.2 to 1, the model's performance initially improves, then declines, aligning with our theoretical analysis. 2) As the ratio of false negative instances increases ($r$ ranges from 0 to 1), the optimal $\tau$ decreases. This observation is intuitive --- a higher ratio of false negative instances implies a larger distribution gap between the employed negative distribution and the ideal, thereby necessitating a larger robust radius $\eta$ to satisfy the condition $D_{KL}(Q^{\text{ideal}}||Q_0)\leq \eta$. As Corollary~\ref{coro1}, a larger robust radius corresponds to a smaller $\tau$.

\textbf{$\tau$ controls the variances of negative samples.} We experiment with an array of $\tau$ values and track the corresponding variance of $f_\theta(x,y)$ for negative samples and the mean value of $f_\theta(x,y^+)$ for positive samples in CIFAR10. As illustrated in Figure \ref{fig_mw1}, we observe a decrease in the variance of negative samples and an increase in the mean value of positive samples as $\tau$ diminishes. Although this observation might be challenging to interpret based on existing studies, it can be easily explained through Equation \eqref{eq_mw_kl}. Note that $\tau$ governs the impact of the variance regularizer. A smaller $\tau$ brings a larger variance penalty, resulting in a reduced variance of negative instances. Simultaneously, as the regularizer's contribution on optimization intensifies, it proportionately diminishes the impact of other loss component on enhancing the similarity of positive instances, thereby leading to a lower mean value of positive instances.

\begin{wrapfigure}{r}{0.42\linewidth}
   \vspace{-15pt}
   \centering
   \captionof{table}{\tf{Mean-variance objective (MW) achieves comparable performance.} We replace the loss function with variance penalty on negative samples.}
   \resizebox{\linewidth}{!}{
      \begin{tabular}{lccccc}
         \toprule
         \multirow{2}{*}{Model} & \multicolumn{2}{c}{Cifar10 } && \multicolumn{2}{c}{Stl10}\\
         \cmidrule{2-3}
         \cmidrule{5-6} 
          & Top-1 & $\tau$ && Top-1 & $\tau$ \\
         \midrule
         SimCLR($\tau_0$) & 91.10 & 0.5 && 81.05 & 0.5 \\
         SimCLR($\tau^*$) & \tf{92.19} & 0.3 && \tf{87.91} & 0.2 \\
         \midrule
         MW & \underline{91.81} & 0.3 && \underline{87.24} & 0.2 \\
         \bottomrule
      \end{tabular}
   }
   \vspace{-5pt}
   \label{tab_mw1}
   % \vspace{-10pt}
\end{wrapfigure}

\textbf{Mean-Variance objective achieves competitive performance.} We replace the InfoNCE with the mean-variance objective (\cf Equation \eqref{eq_mw_kl}) and evaluate the model performance. The results are shown in Table \ref{tab_mw1}. Remarkably,such a simple loss can can deliver competitive results to InfoNCE, thereby affirming the correctness of the approximation in Theorem \ref{thm_phi}. Nevertheless, it marginally underperforms compared to InfoNCE, as it merely represents a second-order Taylor expansion of InfoNCE. However, it is worth highlighting that the mean-variance objective is more efficient.

 \begin{figure}[t]
   \begin{minipage}{0.42\linewidth}
      \centering
      \vspace{1.2em}
      \includegraphics[width=\linewidth, height=4cm]{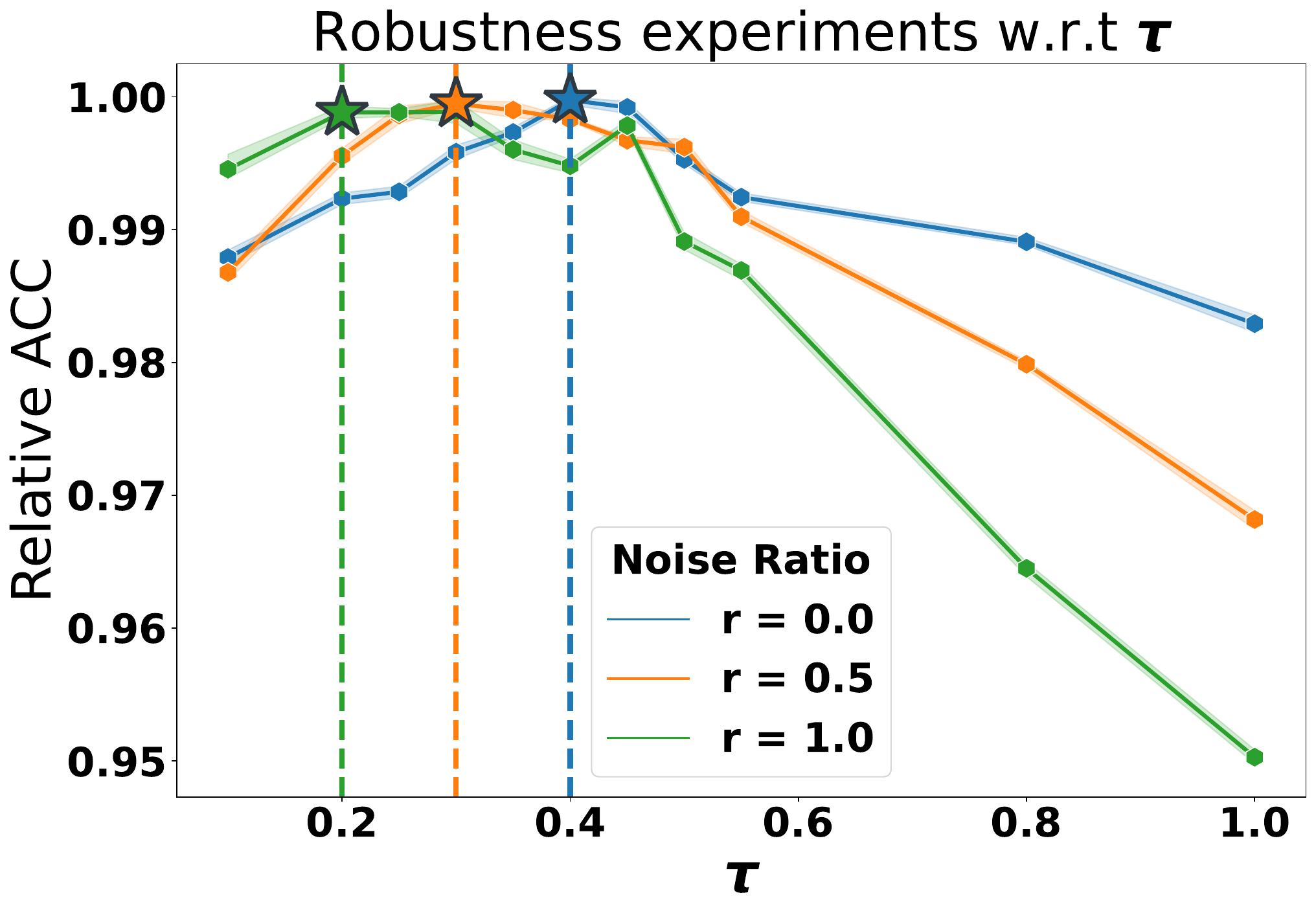}
      % \vspace{-10pt}
   \captionof{figure}{\textbf{$\tau$ varies with the level of sampling bias.} We report the relative top1 accuracy of SimCLR \wrt different $\tau$ across different rate $r$ of false negative samples. }
   \label{fig_noise1}
   \end{minipage}%
   \hspace{1em}
   \hfill
   \begin{minipage}{0.55\linewidth}
     \centering
     \includegraphics[width=\linewidth, height=4cm]{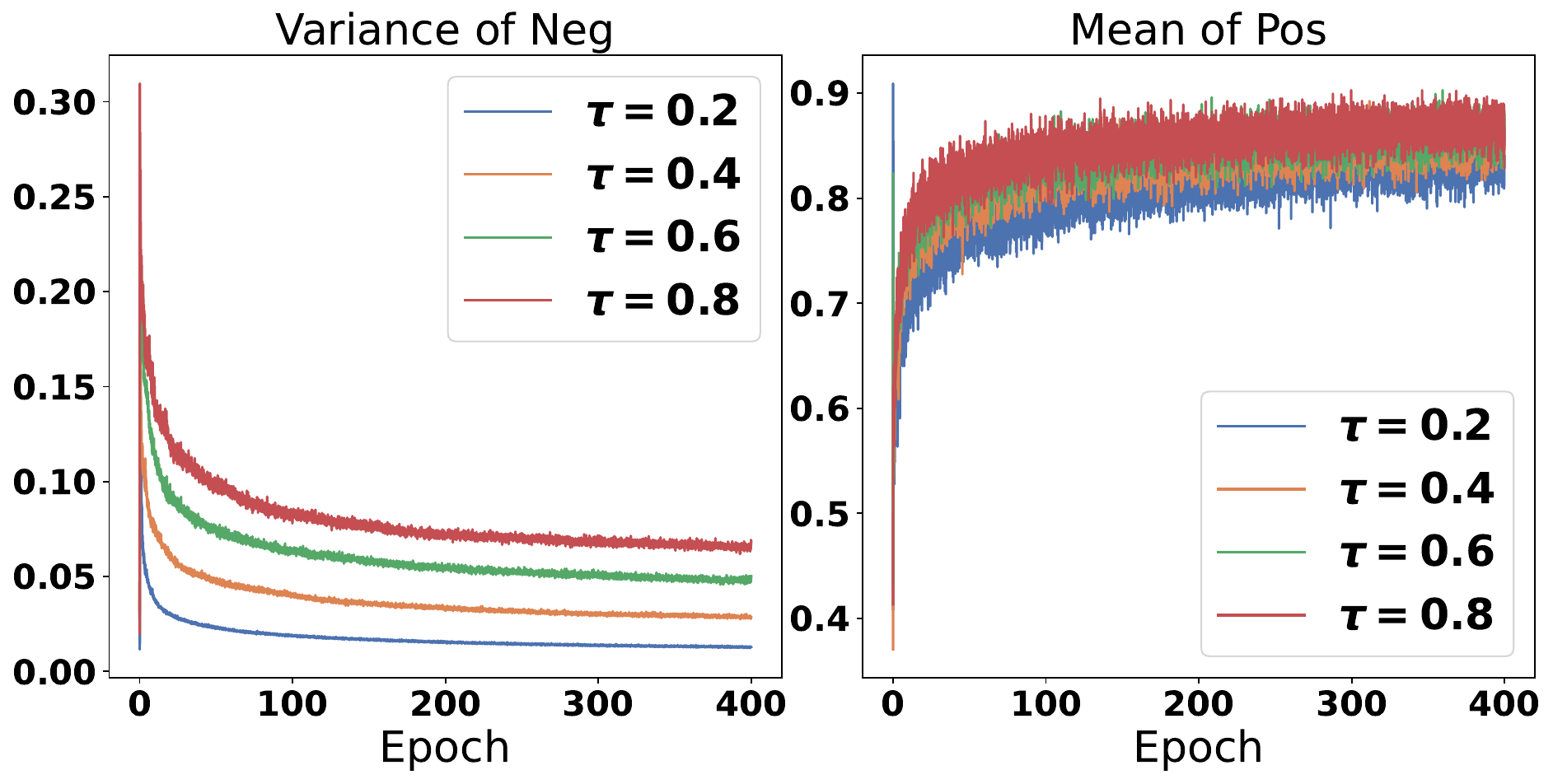}
     \captionof{figure}{\tf{$\tau$ controls the variance of negative samples score.} We alter the value of $\tau$ and record corresponding variance of negative samples and mean value of positive samples.}
      \label{fig_mw1}
   \end{minipage}
   \vspace{-.8em}
 \end{figure}

\section{Relations among DRO, Constrastive Learning and Mutual Information}
Through extensive theoretical analysis, we have far demonstrated that CL is fundamentally a CL-DRO objective that conduct DRO on negative samples. And it is well known that CL is also equivalent to mutual information estimation. A natural question arises: 
\textit{Is there a connection between CL-DRO and MI, as both are theoretical understanding of CL?} 

To make our conclusion more generalized, we would like to transcend the boundary of KL divergence, but exploring a broader family of $\phi$-divergence.
First of all, we would like to extend the definition of MI to utilize $\phi$-divergence:

\label{sec_tighter}

\begin{definition}[$\phi$-MI]
   The $\phi$-divergence-based mutual information is defined as:
   \begin{equation}
      I_{\phi}(X;Y) = D_{\phi}(P(X,Y)||P(X)P(Y)) =  \mathbb{E}_{P_X} [D_{\phi}(P_0||Q_0) ] .
   \end{equation}
\end{definition}
Comparing to KL-divergence, $\phi$-divergence showcases greater flexibility and may exhibits superior properties. $\phi$-MI, as a potential superior metric for distribution dependence, has been studied by numerous work \cite{dragomir2001csiszar, lee2022r}.  

\begin{restatable}{theorem}{phithm}
   \label{cor_1}
   For distributions $P$, $Q$ such that $P \ll Q$, let $\mathcal{F}$ be a set of bounded measurable functions. Let CL-DRO draw positive and negative instances from $P$ and $Q$, marked as $\mathcal{L}_{\text{CL-DRO}}^{\phi}(P,Q)$. Then the CL-DRO objective is the tight variational estimation of $\phi$-divergence. In fact, we have:
   % The $\phi$ divergence admits a tighter variational form.

   % Let $\phi$ be a closed convex function and the $\phi^*$ denotes the corresponding conjugate function. If $\tau=1$, then $\mathcal{I}_{\text{O-DRO}}(f)$ is equivalent to a tigher variational MI estimation:
   \begin{equation}
      \label{eq_tighter_vr}
      \begin{aligned}
         D_{\phi}(P||Q) & = \operatorname*{max}_{{f}\in \mathcal{F}} -\mathcal{L}_{\text{CL-DRO}}^{\phi}(P,Q)= \operatorname*{max}_{{f}\in \mathcal{F}} \mathbb{E}_{P} [f] - \operatorname*{min}_{\lambda \in \mathbb{R}} \{ \lambda + \mathbb{E}_{Q} [\phi^*(f-\lambda)] \} .
      \end{aligned}
      \end{equation}
      Here, the choice of $\phi$ in CL-DRO corresponds to the probability measures in $D_{\phi}(P||Q)$. And $\phi^*$ denotes the convex conjugate.

\end{restatable}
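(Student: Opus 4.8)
The plan is to establish the two equalities in \eqref{eq_tighter_vr} by combining the variational (Fenchel) representation of $\phi$-divergence with the explicit inner maximization carried out in the proof of Theorem \ref{thm:cl2dro}. The starting point is the well-known tight variational formula for $\phi$-divergence obtained from the convex conjugate: for $P \ll Q$,
\begin{equation*}
D_{\phi}(P\|Q) = \sup_{g} \; \mathbb{E}_{P}[g] - \mathbb{E}_{Q}[\phi^*(g)],
\end{equation*}
where the supremum runs over bounded measurable $g$ and $\phi^*$ is the convex conjugate of $\phi$. I would first recall why this holds: writing $D_{\phi}(P\|Q) = \mathbb{E}_Q[\phi(dP/dQ)]$ and using $\phi(t) = \sup_g (gt - \phi^*(g))$ pointwise at $t = dP/dQ$, then swapping sup and integral (justified because the family $\mathcal{F}$ of bounded measurable functions is rich enough to attain the pointwise optimum, namely $g = \phi^{(1)}(dP/dQ)$, up to the boundedness truncation). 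This gives the second equality in \eqref{eq_tighter_vr} once I introduce the auxiliary scalar $\lambda$: replacing $g$ by $f - \lambda$ and noting that the extra freedom in $\lambda$ does not change the supremum over all bounded $f$ (it is absorbed), so $\sup_f \mathbb{E}_P[f] - \min_{\lambda}\{\lambda + \mathbb{E}_Q[\phi^*(f-\lambda)]\}$ equals $\sup_g \mathbb{E}_P[g] - \mathbb{E}_Q[\phi^*(g)]$. The role of $\lambda$ is precisely the Lagrange multiplier $\beta$ (the normalization constraint $\mathbb{E}_Q[dQ'/dQ] = 1$) that already appears in \eqref{eq_dro}.

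Next I would connect the inner term $\min_{\lambda}\{\lambda + \mathbb{E}_Q[\phi^*(f-\lambda)]\}$ to $-\mathcal{L}_{\text{CL-DRO}}^{\phi}(P,Q)$ for a \emph{fixed} $f$. By Definition (CL-DRO), $-\mathcal{L}_{\text{CL-DRO}}^{\phi}(P,Q) = \mathbb{E}_P[f] - \max_{Q': D_\phi(Q'\|Q)\le\eta} \mathbb{E}_{Q'}[f]$. The constrained maximization $\max_{Q'} \mathbb{E}_{Q'}[f]$ subject to $D_\phi(Q'\|Q)\le\eta$ and $\mathbb{E}_Q[dQ'/dQ]=1$ is a convex problem in the likelihood ratio; forming its Lagrangian (with multipliers $\alpha\ge 0$ for the divergence constraint and $\beta$ for normalization, exactly as in \eqref{eq_dro}) and taking the dual, the inner maximization over $Q'$ produces, via the definition of convex conjugate, a term of the form $\alpha\,\mathbb{E}_Q[\phi^*((f-\beta)/\alpha)] + \beta + \alpha\eta$. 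Here is where the correspondence "$\tau \leftrightarrow$ optimal $\alpha$, $\lambda \leftrightarrow \beta$" enters; in the variational-representation regime one takes the scaling so that the $\alpha\eta$ and $\alpha$-scaling are absorbed (equivalently, one is representing $D_\phi$ itself rather than a radius-$\eta$ ball), leaving exactly $\min_\lambda\{\lambda + \mathbb{E}_Q[\phi^*(f-\lambda)]\}$. Strong duality holds because Slater's condition is satisfied ($Q'=Q$ is strictly feasible when $\eta>0$) and the objective/constraints are convex, which licenses the swap of $\max_{Q'}$ and $\min_{\alpha,\beta}$.

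Finally I would take $\max_{f\in\mathcal{F}}$ on both sides: the left side of the chain becomes $\max_f (-\mathcal{L}_{\text{CL-DRO}}^{\phi}(P,Q))$, and the right side becomes $\max_f \mathbb{E}_P[f] - \min_\lambda\{\lambda + \mathbb{E}_Q[\phi^*(f-\lambda)]\}$, which by the first paragraph equals $D_\phi(P\|Q)$; specializing $P = P_0$, $Q=Q_0$ and taking $\mathbb{E}_{P_X}$ recovers $I_\phi(X;Y)$ and ties back to the $\phi$-MI definition. The main obstacle I anticipate is the rigorous justification of the sup/integral interchange and of strong duality in the infinite-dimensional (measure) setting — specifically, confirming that the pointwise optimizer $g^\star = \phi^{(1)}(dP/dQ)$ can be approximated within the class $\mathcal{F}$ of \emph{bounded} measurable functions (so that the "tightness" claim is genuine and not merely an inequality), and checking the integrability/attainment conditions on $\phi^*$ so that $\mathbb{E}_Q[\phi^*(f-\lambda)]$ is finite and the inner $\min_\lambda$ is attained. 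The DRO-side argument (Lagrangian duality for the $\phi$-ball) is essentially the computation already done for the KL case in Theorem \ref{thm:cl2dro}, so the new content is really the $\phi$-generalization of that conjugate calculation plus these measure-theoretic technicalities.
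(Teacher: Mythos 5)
Your proposal is correct and follows essentially the same route as the paper: Lagrangian duality on the $\phi$-ball constraint to pass from $\mathcal{L}_{\text{CL-DRO}}^{\phi}$ to the conjugate form $\min_{\lambda}\{\lambda + \mathbb{E}_Q[\phi^*(f-\lambda)]\}$ (with $\tau\leftrightarrow\alpha^*$ normalized to $1$), then identifying the optimum over $f$ with $D_\phi(P\|Q)$. The only organizational difference is in the second half: you take the Nguyen--Wainwright--Jordan formula $D_\phi(P\|Q)=\sup_g\{\mathbb{E}_P[g]-\mathbb{E}_Q[\phi^*(g)]\}$ as given and note that the shift $g = f-\lambda$ is absorbed in the joint supremum, whereas the paper derives the $\lambda$-form from scratch by writing $\mathbb{E}_P[f]=\mathbb{E}_Q[f\,dP/dQ]$, substituting $Y=f+\beta$, using $\mathbb{E}_Q[1-dP/dQ]=0$ to kill the $\beta$-term, interchanging $\sup$ and integral, and invoking $\phi^{**}=\phi$. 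These are the same calculation in different order, and your caveats (Slater's condition for strong duality, attainability of the pointwise optimizer $g^\star=\phi'(dP/dQ)$ within the bounded class $\mathcal{F}$, and the informal handling of the $\alpha^*\eta$ constant) are the same assumptions the paper is implicitly relying on.
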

The proof is presented in Appendix~\ref{appendix_5}.
This theorem establishes the connection between CL-DRO and $\phi$-divergence. If we replace $P$ with $P_0$ and $Q$ with $Q_0$, and simply hypothesis that the metric function $f_\theta$ has a sufficiently large capacity, we have the following corollary:
\begin{restatable}{corollary}{MIandCLDRO}
   \label{coro3}
   $\mathbb{E}_{P_X} [D_{\phi}(P_0||Q_0) ]$ is a tight variational estimation of $I_{\phi}(X;Y)$.
 \end{restatable}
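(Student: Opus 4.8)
The plan is to derive Corollary~\ref{coro3} directly from Theorem~\ref{cor_1} by instantiating the generic distributions $(P,Q)$ with the conditional/marginal pair $(P_0,Q_0)$ and then taking the expectation over $P_X$. First I would fix $x$ in the support of $P_X$ and apply Theorem~\ref{cor_1} with $P \leftarrow P_0 = P_{Y|X=x}$ and $Q \leftarrow Q_0 = P_Y$; the hypothesis $P_0 \ll Q_0$ holds whenever $P(X,Y) \ll P(X)P(Y)$, which is exactly the condition under which $I_\phi(X;Y)$ is finite and well-defined, so I would state this as a standing assumption. Theorem~\ref{cor_1} then gives, for each $x$,
\begin{equation*}
D_{\phi}(P_0 \| Q_0) = \max_{f \in \mathcal{F}} -\mathcal{L}^{\phi}_{\text{CL-DRO}}(P_0,Q_0) = \max_{f \in \mathcal{F}} \; \mathbb{E}_{P_0}[f] - \min_{\lambda \in \mathbb{R}} \{ \lambda + \mathbb{E}_{Q_0}[\phi^*(f-\lambda)] \},
\end{equation*}
where the maximizing $f$ may depend on $x$.

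Next I would take $\mathbb{E}_{P_X}[\cdot]$ of both sides. By the $\phi$-MI definition, the left side becomes $\mathbb{E}_{P_X}[D_\phi(P_0\|Q_0)] = I_\phi(X;Y)$. For the right side, the key observation is that if the metric function $f_\theta(x,\cdot)$ is allowed to range over a sufficiently rich class — formally, if for $P_X$-almost every $x$ the section $f_\theta(x,\cdot)$ can realize (or approximate arbitrarily well) the per-$x$ optimizer from Theorem~\ref{cor_1} — then the expectation of the pointwise maxima equals the maximum over $\theta$ of the expectation; i.e.\ one can interchange $\mathbb{E}_{P_X}$ and $\max$. This is a standard interchange-of-expectation-and-supremum argument valid when the optimization decouples across $x$ and the function class is decomposable (a ``universal approximator'' assumption, exactly as stated in the corollary's hypothesis). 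This yields
\begin{equation*}
I_\phi(X;Y) = \mathbb{E}_{P_X}\!\big[ D_\phi(P_0\|Q_0) \big] = \max_{f_\theta} \; -\mathbb{E}_{P_X}\big[\mathcal{L}^{\phi}_{\text{CL-DRO}}(P_0,Q_0)\big] = \max_{f_\theta} \big(-\mathcal{L}^{\phi}_{\text{CL-DRO}}\big),
\end{equation*}
so $-\mathcal{L}^{\phi}_{\text{CL-DRO}}$ is a variational lower bound on $I_\phi(X;Y)$ that is tight (attained, or approached, at the optimal $f_\theta$) — which is precisely the assertion that $\mathbb{E}_{P_X}[D_\phi(P_0\|Q_0)]$ admits a tight variational estimation via CL-DRO.

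I expect the main obstacle to be making the interchange of $\mathbb{E}_{P_X}$ and $\max_{f_\theta}$ rigorous: Theorem~\ref{cor_1} produces, for each $x$, a maximizer $f^{\star}_x$ over $\mathcal{F}$, but a single parametric $f_\theta$ must simultaneously serve all $x$. Handling this cleanly requires either (i) the ``infinite capacity'' idealization stated in the corollary — assume $\{x \mapsto f_\theta(x,\cdot)\}$ contains every measurable section, so $f^\star(x,y) := f^\star_x(y)$ is itself attainable — or (ii) a measurable-selection argument guaranteeing $x \mapsto f^\star_x$ can be chosen measurably, plus a density/approximation step showing the parametric class approximates it within $\varepsilon$ in the relevant ($L^1(P_X)$-type) sense, so the supremum is approached even if not attained. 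I would state the result under assumption (i) for a clean proof, remark that (ii) suffices with minor technical overhead, and note that all integrands are finite by the boundedness of $\mathcal{F}$ (the functions in the class are bounded measurable, and $\phi^*$ is finite on the relevant range), which legitimizes Fubini/Tonelli and the expectation manipulations above.
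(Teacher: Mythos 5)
Your proof is correct and follows the same route the paper implicitly takes: the paper gives no separate proof of Corollary~\ref{coro3}, only the remark that it follows from Theorem~\ref{cor_1} by substituting $(P,Q)\leftarrow(P_0,Q_0)$ and assuming $f_\theta$ has sufficient capacity, and you reproduce exactly this instantiation-plus-expectation argument. The one thing you add beyond the paper is the explicit discussion of when the interchange of $\mathbb{E}_{P_X}$ and $\max_f$ is justified (decomposable function class / measurable selection), which the paper sweeps under the ``sufficiently large capacity'' phrase; this is a useful clarification rather than a deviation.
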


\textbf{InfoNCE is a tighter MI estimation.} 
The prevailing and widely used variational approximation of $\phi$-divergences is the Donsker-Varadhan target ($I_{DV}$), defined as $D_{\phi}(P||Q) \coloneqq \operatorname*{max}_{{f}\in \mathcal{F}} \{ \mathbb{E}_P [f ] -  \mathbb{E}_Q [\phi^*(f)] \}$. However, it has been observed by Ruderman et al. \cite{ruderman2012tighter} that this expression is loose when applied to probability measures, as it fails to fully consider the fundamental nature of divergences being defined between probability distributions. To address this limitation, let us further assume, as we do for $\mathbb{R}$, that $\mathbb{E}_P[1]=1$. Under this common assumption, we arrive at a more refined formulation: $D_{\phi}(P||Q)   \coloneqq \operatorname*{max}_{{f}\in \mathcal{F}} \{ \mathbb{E}_{P} [f] - \operatorname*{min}_{\lambda \in \mathbb{R}} \{ \lambda + \mathbb{E}_{Q} [\phi^*(f-\lambda)] \}\}$. Notably, this result, embracing the infimum over $\lambda$ in the spirit of Ruderman et al. \cite{ruderman2012tighter}, appears to be consistent with our proof in Theorem \ref{cor_1}. Hence, we argue that "InfoNCE is a tighter MI estimation."

Furthermore, this corollary provides a rigorous and succinct methodology for deriving InfoNCE from Variational MI. Specifically, given $\phi(x)=x\log x - x + 1$ for the case of KL-divergence, its conjugate $\phi^*(x)=e^x-1$ can be computed. Subsequently, by resolving the simple convex problem $\operatorname*{min}_{\lambda \in \mathbb{R}}\{ \lambda + \tau \mathbb{E}_Q [\phi^*(\frac{f(x,y)-\lambda}{\tau})] \}$, we derive the optimal choice of $\lambda$, denoted as $\lambda^*= \tau \log \mathbb{E}_{Q_0} [e^{f(x,y)/\tau}]$. Inserting $\lambda^*$ into Equation \eqref{eq_tighter_vr}, $D_{\phi}(P_0||Q_0)$ simplifies to $\operatorname{max}_{{f}} \{ \mathbb{E}_{P_0} [f(x,y^+)] - \tau \log \mathbb{E}_{Q_0} [e^{f(x,y)/\tau}] \} $, which is the minus of the InfoNCE objective.

\textbf{DRO bridges the gap between MI and InfoNCE.} Although previous works such as MINE\cite{belghazi2018mutual} and CPC\cite{oord2018representation} have demonstrated that InfoNCE is a lower bound for MI, they still have significant limitations \cite{poole2019variational}. For instance, MINE uses a critic in Donsker-Varadhan target ($\mathcal{I}_{DV}$) to derive a bound that is neither an upper nor lower bound on MI, while CPC relies on unnecessary approximations in its proof, resulting in some redundant approximations. In contrast, Theorem \ref{cor_1} presents a rigorous and practical method for accurately estimating the tight lower bound of MI.

\textbf{DRO provides general MI estimation.} Although a strict variational bound for MI has been proposed in Poole et al. \cite{poole2019variational}, their discussion is limited to the choice of KL-divergence. Theorem \ref{cor_1} is suitable for estimating various $\phi$-MI. For example, if we consider the case of $\chi^2$ divergence, given by $\phi(x)=\frac{1}{2\sqrt{2}}(x-1)^2$, we could obtain convex conjugate $\phi^*(x)=x+x^2$. The variational representation becomes $I_{\tt{\chi^2}}(X;Y)=D_{\tt{\chi^2}}(P_0\,\|\,Q_0) = \max_{f}~ \{\mathbb{E}_{P_0}[f(x,y^+)] - \mathbb{E}_{Q_0}[f(x,y)]-\mathbb{V}_{Q_0}[f(x,y)] \}$, where $\mathbb{V}_{Q_0}[f(x,y)]$ represents the variance of $f$ on distribution $Q$. Our theoretical framework offers the opportunity to estimate flexible $\phi$-MI, which can be adapted to suit specific scenarios.

\section{Method}
\subsection{Shortcomings of InfoNCE}

Based on the understanding of CL from DRO perspective, some weaknesses of InfoNCE is revealed:

\begin{itemize}[leftmargin=*]
   \item \textbf{Too conservative.} Distributionally Robust Optimization (DRO) focuses on the worst-case distribution, often leading to an overemphasis on the hardest negative samples by assigning them disproportionately large weights. More precisely, the weights are proportional to $\exp[{f_\theta(x,y)/\tau}]$, causing samples with the highest similarity to receive significantly larger weights. However, in practice, the most informative negative samples are often found in the "head" region (e.g., top-20\% region with high similarity) rather than exclusively in the top-1\% highest region~\cite{cai2020all}. Overemphasizing these top instances by assigning excessive weights is typically suboptimal.
   \item \textbf{Sensitive to outliers.} Existing literature \cite{zhai2021doro} reveals that DRO is markedly sensitive to outliers, exhibiting significant fluctuations in training performance. This is because outliers, once they appear in regions of high similarity, can be assigned exceptionally high weights. Consequently, this phenomenon unavoidably influences the rate of model convergence during the training phase.
   \end{itemize}

\begin{wrapfigure}{r}{0.45\linewidth}
   \vspace{-10pt}
   \centering
   \includegraphics[width=1.0\linewidth]{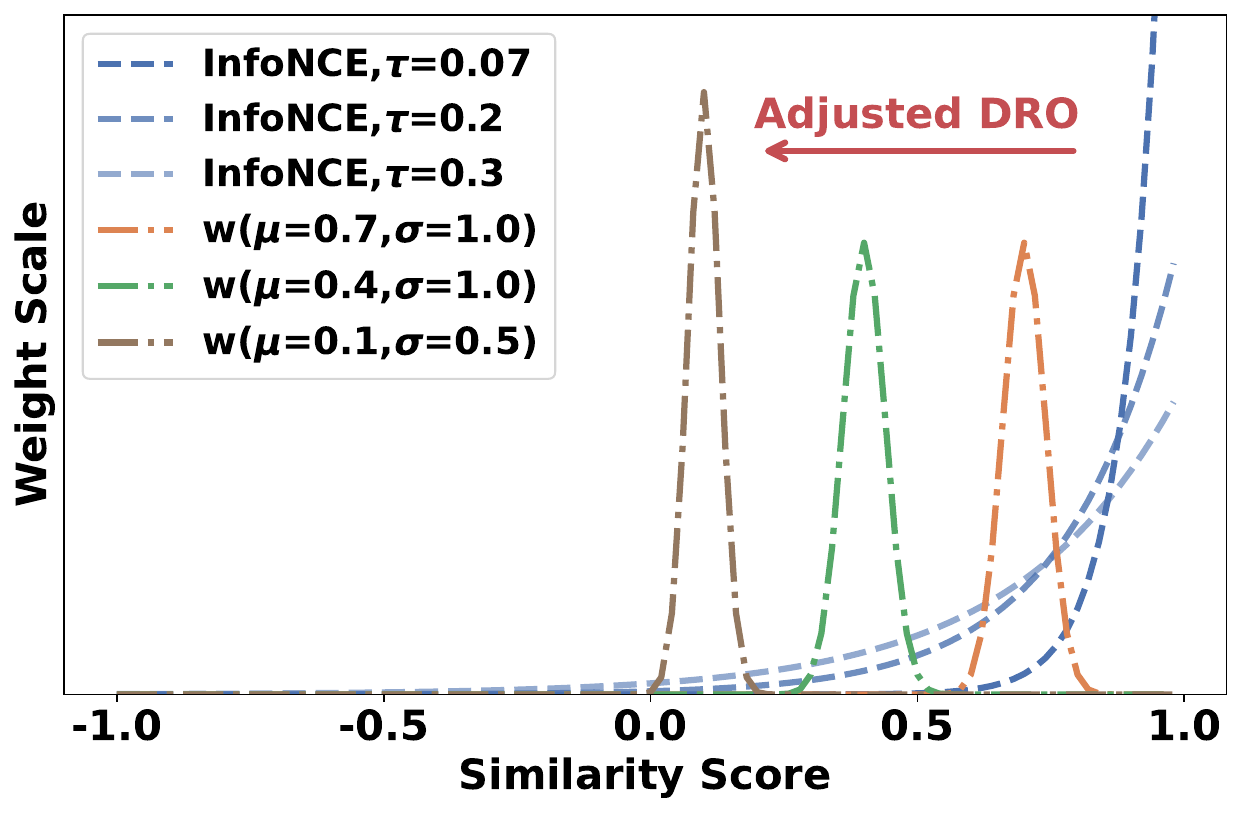}
   % \vspace{-10pt}
   \caption{We visualize the training weight of negative samples \wrt similarity score. {InfoNCE} (in \textcolor{infoNCE_blut}{BlUE}) over-emphasize hard negative samples, while ADNCE utilizes the weight $w$ (in \textcolor{adnce_1}{ORANGE}, \textcolor{adnce_2}{GREEN}, \textcolor{adnce_5}{BROWN}) to adjust the distribution.}
   \label{fig_adnce1}
   \vspace{-10pt}
\end{wrapfigure}

\subsection{ADNCE}

Our goal is to refine the worst-case distribution, aiming to assign more reasonable weights to negative instances.  The aspiration is to create a flexible distribution that can be adjusted to concentrate on the informative region.  This is accomplished by incorporating Gaussian-like weights, as defined below:
\begin{equation}
   w(f_\theta(x,y),\mu,\sigma)\propto \frac{1}{\sigma\sqrt{2\pi}}\exp[{-\frac{1}{2}(\frac{f_\theta(x,y)- \mu}{\sigma})^2}],
\end{equation}
where $\mu$ and $\sigma$ are two hyper-parameter we could control.
As is illustrated in Figure \ref{fig_adnce1}, $\mu$ controls the central region of weight allocation, whereby samples closer to $\mu$ have larger weights, while $\sigma$ controls the height of the weight allocation in the central region. Intuitively, a smaller $\sigma$ results in a more pronounced weight discrepancy among the samples.

Inserting the weights into InfoNCE formula, we obtain the novel \textbf{AD}justed Info\textbf{NCE} (ADNCE):
\begin{equation}
   \mathcal{L}_{\text{ADNCE}} = -\mathbb{E}_{P} [f_\theta(x,y^+)/\tau] + \log \mathbb{E}_{Q_0} [ w(f_\theta(x,y),\mu,\sigma) e^{f_\theta(x,y)/\tau}/ Z_{\mu, \sigma}],
\end{equation}

where $Z_{\mu, \sigma}$ denotes the partition function of $w(f(x,y),\mu,\sigma)$, \ie $Z_{\mu, \sigma}=\mathbb{E}_{Q_0} [ w(f_\theta(x,y),\mu,\sigma)]$.  

It is noteworthy that our proposed approach only involves two modified lines of code and introduce no additional computational overhead, as compared to InfoNCE \cite{chen2020simple}. Pytorch-style pseudocode for ADNCE is given in Appendix.

\section{Experiments}
We empirically evaluate our adjusted InfoNCE method --- ADNCE, and apply it as a modification to respective classical CL models on image, graph, and sentence data.
For all experiments, $\mu$ is treated as a hyper-parameter to adjust the weight allocation, where $\sigma$ is set to $1$ by default. 
For detailed experimental settings, please refer to Appendix.

\subsection{Image Contrastive Learning}
We begin by testing ADNCE on vision tasks using CIFAR10, CIFAR100, and STL10 datasets.
We choose SimCLR \cite{chen2020simple} as the baseline model (termed as InfoNCE($\tau_0)$). We also include two variants of vanilla InfoNCE for comprehensive comparisons. One approach is applying a grid search for $\tau$ (termed as InfoNCE($\tau^{*}$)), which serves to confirm its importance as discussed in Section \ref{empirical_analyze}. The other is from Tian el al. \cite{tian2022understanding}, which proposes a novel approach for setting for the value of $\tau$ (termed as $\alpha$-CL-direct).

As shown in Table \ref{main_result}, a grid search for $\tau$ has a crucial impact on the model performance. This impact is significant throughout the entire training process, from the early stage (100 epochs) to the later stage (400 epochs).
Additionally, ADNCE exhibits sustained improvement and notably enhances performance in the early stages of training.
In contrast, while $\alpha$-CL-direct introduces a novel approach for setting the value of $\tau$, its essence remains weighted towards the most difficult negative samples, hence yielding similar performance compared to fine-tuning $\tau$ using grid search. In Figure \ref{fig_conv1}, we plot the training curve to further illustrate the stable superiority of ADNCE.

\begin{table*}[t]
   \footnotesize
   \caption{Performance comparisons on multiple loss formulations (ResNet50 backbone, batchsize 256). Top-1 accuracy with linear evaluation protocol. $\tau_0$ means $\tau=0.5$, $\tau^*$ represents grid searching on $\tau$ and $\alpha$-CL-direct is a similar work \cite{tian2022understanding} on CL theory. Bold is highest performance. Each setting is repeated 5 times with different random seeds.}
   \begin{tabularx}{\textwidth}{l|XXXX|XXXX|XXXX}
      \toprule
      \multirow{2}{*}{\tf{Model}} & \multicolumn{4}{c}{\tf{CIFAR10}  }  & \multicolumn{4}{c}{\tf{STL10}} & \multicolumn{4}{c}{\tf{CIFAR100}} \\
      & 100  & 200 & 300   & 400  & 100  & 200  & 300   & 400 & 100  & 200  & 300   & 400  \\
      % \cmidrule{1-13}
      \midrule
      InfoNCE ($\tau_0$)   & 85.70 & 89.21 & 90.33 & 91.01 & 75.95 & 78.47 & 80.39 & 81.67 & 59.10  & 63.96 & 66.03 & 66.53 \\
      InfoNCE ($\tau^{*}$)   & 86.54 & 89.82 & 91.18 & 91.64 & 81.20 & 84.77 & 86.27 & 87.69 & 62.32 & 66.85 & 68.31& 69.03 \\
      $\alpha$-CL-direct & 87.65 & 90.11 & 90.88 & 91.24 & 80.91& 84.71& 87.01& 87.96 & 62.75 & 66.27 & 67.35 & 68.54 \\
      % \cline{1-13}
      \midrule
      ADNCE & \tf{87.67} & \tf{90.65} & \tf{91.42} & \tf{91.88} & \tf{81.77} & \tf{85.10} & \tf{87.01} & \tf{88.00} & \tf {62.79} &\tf{66.89} & \tf{68.65} & \tf{69.35} \\
      \bottomrule
   \end{tabularx}
   \label{main_result}
\end{table*}

\begin{table}[t]
   \begin{center}
   \centering
   \caption{
       Sentence embedding performance on STS tasks (Spearman's correlation, ``all'' setting).
       \tf{Bold} indicates the best performance while \underline{underline} indicates the second best on each dataset.
       $\clubsuit$: results from SimCSE \cite{gao2021simcse};
       all other results are reproduced or reevaluated by ourselves.
   }
   \label{main_result2}
   \small
   \resizebox{0.98\textwidth}{!}{%
   \begin{tabular}{lcccccccc}
   \toprule
      \tf{Model} & \tf{STS12} & \tf{STS13} & \tf{STS14} & \tf{STS15} & \tf{STS16} & \tf{STS-B} & \tf{SICK-R} & \tf{Avg.} \\
   \midrule
       GloVe embeddings (avg.)$^\clubsuit$ & 55.14 & 70.66 & 59.73 & 68.25 & 63.66 & 58.02 & 53.76 & 61.32 \\
       BERT\ba-flow$^\clubsuit$ & 58.40&	67.10&	60.85&	75.16&	71.22&	68.66&	64.47&	66.55 \\ % NLI
       BERT\ba-whitening$^\clubsuit$ & 57.83& 66.90 & 60.90 & 75.08& 71.31& 68.24& 63.73& 66.28\\ % NLI
       CT-BERT\ba$^\clubsuit$ & 61.63 & 76.80 & 68.47 & 77.50 & 76.48 & 74.31 & 69.19 &72.05 \\
       \midrule
       \ours-BERT\ba ($\tau_0$) & {68.40}&	\tf{82.41} &	{74.38}&	{80.91}&	{78.56}&	{76.85}&	\tf{72.23}&	{76.25} \\
       \ours-BERT\ba ($\tau^*$) & \underline{71.37}&	{81.18} &	\underline{74.41}&	\tf{82.51}&	\underline{79.24}&	\underline{78.26}&	{70.65}&	{76.81} \\
       ADNCE-BERT\ba & \tf{71.38}&	\underline{81.58} &	\tf{74.43}&	\underline{82.37}&	\tf{79.31}&	\tf{78.45}&	\underline{71.69}&	\tf{77.03} \\
       \midrule
       \ours-RoBERTa\ba ($\tau_0$)& \tf{70.16} &	{81.77}	&	{73.24}	&	{81.36}	&	{80.65}	&	{80.22}	&	68.56 & {76.57}\\
       \ours-RoBERTa\ba ($\tau^*$)& {68.20} &	\tf{81.95}	&	\underline{73.63}	&	\underline{81.83}	&	\underline{81.55}	&	\underline{80.96}	&	\underline{69.56} & \underline{76.81}\\
       ADNCE-RoBERTa\ba & \underline{69.22}	& \underline{81.86}	& \tf{73.75}	& \tf{82.88}	& \tf{81.88}	& \tf{81.13}	& \tf{69.57} & \tf{77.10}\\
   \bottomrule
   \end{tabular}
   }
   \end{center}
   \vspace{-5pt}
 \end{table}

 \begin{figure}[h]
   \begin{minipage}{0.65\linewidth}
      \centering
      \captionof{table}{\textbf{Self-supervised representation learning on TUDataset}: The baseline results are excerpted from the published papers.}
      \resizebox{\linewidth}{!}{
      \begin{tabular}{c|cccc}
         \toprule
         \bf Methods & \bf RDT-B & \bf NCI1 & \bf PROTEINS & \bf DD   \\
         \midrule
         \midrule
        node2vec & -& 54.9$\pm$1.6 & 57.5$\pm$3.6 & -   \\
        sub2vec &71.5$\pm$0.4& 52.8$\pm$1.5 & 53.0$\pm$5.6&  -   \\
        graph2vec  & 75.8$\pm$1.0& 73.2$\pm$1.8 & 73.3$\pm$2.1 & -  \\
        InfoGraph & 82.5$\pm$1.4& 76.2$\pm$1.1 & 74.4$\pm$0.3 & 72.9$\pm$1.8     \\
        JOAO &85.3$\pm$1.4 & 78.1$\pm$0.5 & 74.6$\pm$0.4 & 77.3$\pm$0.5 \\
        JOAOv2 &86.4$\pm$1.5& 78.4$\pm$0.5 & 74.1$\pm$1.1  & 77.4$\pm$1.2  \\
        RINCE  &  90.9$\pm$0.6 &  78.6$\pm$0.4 &  74.7$\pm$0.8 &  78.7$\pm$0.4  \\
         \midrule
         GraphCL ($\tau_0$) &89.5$\pm$0.8& 77.9$\pm$0.4 & 74.4$\pm$0.5  & 78.6$\pm$0.4  \\
         GraphCL ($\tau^*$) &  90.7$\pm$0.6 &  79.2$\pm$0.3 &  74.7$\pm$0.6 &  78.5$\pm$1.0  \\

         ADNCE & \bf 91.4$\pm$0.3 & \bf 79.3$\pm$0.7 & \bf 75.1$\pm$0.6 & \bf 79.2$\pm$0.6  \\
         \bottomrule
      \end{tabular}
      }
      \label{fig_node}
   \end{minipage}%
   \hspace{1em}
   \hfill
   \begin{minipage}{0.32 \linewidth}
     \centering
     \includegraphics[width=\linewidth]{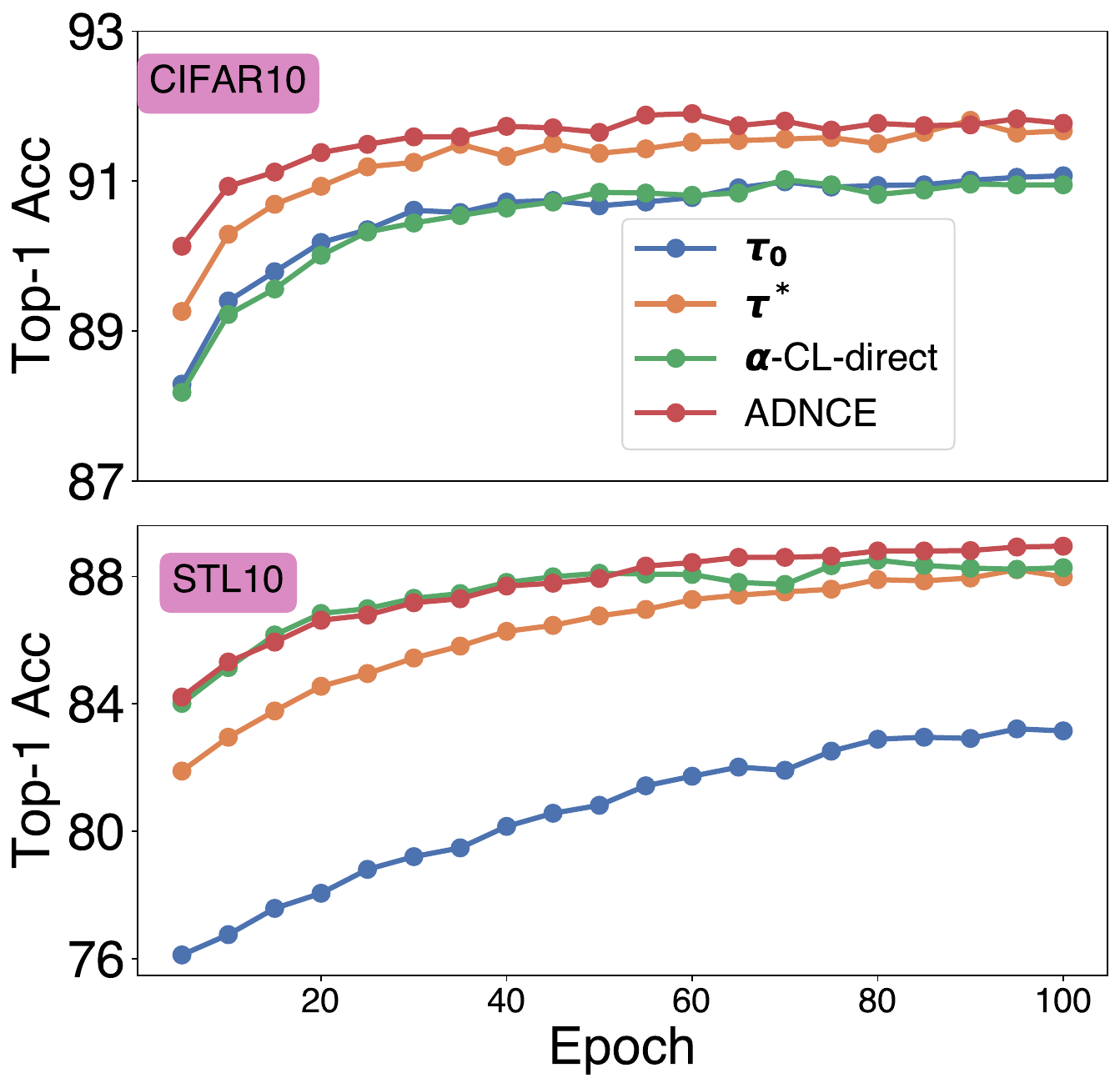}
   %   \vspace{-2em}
     \captionof{figure}{Learning curve for Top-1 accuracy by linear evaluation on CIFAR10 and STL10.}
      \label{fig_conv1}
   \end{minipage}
   \vspace{-.8em}
 \end{figure}

 \subsection{Sentence Contrastive Learning}
 We follow the experimental setup in GraphCL \cite{you2020graph} and conduct evaluations on $7$ semantic textual similarity (STS) tasks in an unsupervised setting. Specifically, we evaluate the model on STS 2012-2016, STS-benchmark, and SICK-Relatedness. To ensure fair comparisons, we use pre-trained checkpoints of BERT and RoBERTa, and randomly sample sentences from the English Wikipedia.
 
 As observed in Table \ref{main_result2}, ADNCE consistently outperforms InfoNCE, achieving an average Spearman's correlation of 77\%.
 The ease of replacing InfoNCE with ADNCE and the resulting substantial performance improvements observed in BERT and RoBERTa serve as testaments to the efficacy and broad applicability of ADNCE.
 Furthermore, the improvements of $\tau^*$ over $\tau_0$ emphasize the significance of selecting a proper robustness radius.

 \subsection{Graph Contrastive Learning}
 To study the modality-agnostic property of ADNCE beyond images and sentences, we conduct an evaluation of its performance on TUDataset \cite{morris2020tudataset}. We use a classical graph CL model GraphCL \cite{you2020graph} as the baseline.
 To ensure a fair comparison, we adhered to the same protocol used in GraphCL \cite{you2020graph}, employing techniques such as node dropout, edge perturbation, attribute masking, and subgraph sampling for data augmentation.
 We replace the vanilla InfoNCE with two different variants, namely GraphCL($\tau^{*}$) obtained through grid search, and our proposed ADNCE.

Table \ref{fig_node} demonstrates that ADNCE outperforms all baselines with a significant margin on four datasets, especially when compared to three state-of-the-art InfoNCE-based contrastive methods, GraphCL, JOAO, and JOAOv2, thereby setting new records on all four datasets.
The improvements observed in GraphCL($\tau^{*}$) relative to GraphCL($\tau_0$) align with our understanding of DRO.

\section{Conclusion and Limitations}
We provide a novel perspective on contrastive learning (CL) via the lens of Distributionally Robust Optimization (DRO), and reveal several key insights about the tolerance to sampling bias, the role of $\tau$, and the theoretical connection between DRO and MI. Both theoretical analyses and empirical experiments confirm the above findings. Furthermore, we point out the potential shortcomings of CL from the perspective of DRO, such as over-conservatism and sensitivity to outliers.
To address these issues, we propose a novel CL loss --- ADNCE, and validate its effectiveness in various domains.

The limitations of this work mainly stem from two aspects: 1) Our DRO framework only provides a theoretical explanation for InfoNCE-based methods, leaving a crucial gap in the success of CL without negative samples \cite{chen2020simple, grill2020bootstrap}. 2) ADNCE requires weight allocation to be adjusted through parameters and cannot adaptively learn the best reweighting scheme.

\begin{ack}
   This work is supported by the National Key Research and Development Program of China (2021ZD0111802), the National Natural Science Foundation of China (62372399, 9227010114), the Starry Night Science Fund of Zhejiang University Shanghai Institute for Advanced Study (SN-ZJU-SIAS-001), and supported by the University Synergy Innovation Program of Anhui Province (GXXT-2022-040).
\end{ack}

% \bibliography{neurips_2023}
% \bibliographystyle{unsrtnat}
\bibliography{neurips_2023}
\bibliographystyle{unsrt}

% \section{Supplementary Material}
\newpage
\appendix

\section{Appendix of Proofs}

\subsection{Proof of Theorem \ref{thm:cl2dro}}
\label{appendix_1}
\gradientmatch*
\begin{proof}
   \textbf{In all the following appendix, for the sake of conserving space and clarity of expression, we have opted to simplify $f_\theta(x,y)$ as $f_\theta$ and omit $\mathbb{E}_{P_X}$.}

   To complete the proof, we start with giving some important notations and theorem. 
   %  We consider CL-DRO under $\phi$-divergence and start with some basic knowledge.
  \begin{definition}[$\phi$-divergence \cite{nguyen2010estimating}]
     \label{def:phi_divergence}
     For any convex funtion $\phi$ with $\phi(1)=0$, the $\phi$-divergence between $Q$ and $Q_0$ is:
     \begin{equation}
        D_{\phi}(Q||Q_0) \coloneqq \mathbb{E}_{Q_0}[\phi(dQ/dQ_0)]
     \end{equation}
     where $D_{\phi}(Q||Q_0)=\infty $ if $Q$ is not absolutely continuous with respect to $Q_0$. Specially, when $\phi(x)=x\log x - x + 1$, $\phi$-divergence degenerates to the well-known KL divergence.
     \end{definition}
  \begin{definition}[Convex conjugate \cite{hiriart2004fundamentals}]
     \label{def:Convex}
     We consider a pair $(A, B)$ of topological vector spaces and a bilinear form $\langle \cdot, \cdot \rangle \to \mathbb{R} $ such that $(A, B, \langle \cdot, \cdot \rangle)$ form a dual pair. For a convex function $f: \mathbb{R} \to \mathbb{R}$, $dom f\coloneqq \{ x \in \mathbb{R}: f(x)<\infty \}$ is the effective domain of $f$. The convex conjugate, also known as the Legendre-Fenchel transform, of $f:A\to \mathbb{R}$ is the function $f^*:B\to \mathbb{R}$ defined as
   %   is a generalization of the Legendre transformation which applies to non-convex functions.
     \begin{equation}
        f^*(b) = \sup_{a}\{ab - f(a)\}, \quad b \in B
     \end{equation}
  \end{definition}
  \begin{theorem}[Interchange of minimization and integration \cite{ben2007old}]
   \label{thm_rearange}
   
   Let $(\Omega,\mathcal{F}) $ be a measurable space equipped with $\sigma $-algebra $\mathcal{F}$, $L^p(\Omega, \mathcal{F}, P)$ be the linear space of measurable real valued functions $f:\Omega\to \mathbb{R}$ with $||f||_p < \infty$, and let $\mathcal{X} \coloneqq L^p(\Omega, \mathcal{F}, P)$, $p\in[1, +\infty]$. Let $g:\mathbb{R}\times \Omega \to \mathbb{R}$ be a normal integrand, and define on $\mathcal{X}$. Then,
   \begin{equation}
      \operatorname*{min}_{x\in \mathcal{X}} \int_{\Omega}  g(x(\omega ), \omega)\,dP(\omega) = \int_{\Omega} \operatorname*{min}_{s \in \mathbb{R} }  g(s,\omega)\,dP(\omega)
   \end{equation}
  \end{theorem}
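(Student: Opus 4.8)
The plan is to prove the claimed identity by establishing the two inequalities ``$\ge$'' and ``$\le$'' separately: the first is a pointwise estimate, the second rests on a measurable-selection argument, and I expect the latter to be the real work. Throughout I would write $\psi(\omega):=\inf_{s\in\mathbb{R}}g(s,\omega)$ for the pointwise infimum appearing on the right-hand side, noting that $\psi$ is $\mathcal{F}$-measurable because $g$ is a normal integrand (the epigraphical projection of a normal integrand is measurable).

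For the easy direction I would argue as follows. For every $x\in\mathcal{X}$ and $P$-a.e.\ $\omega$ one has $g(x(\omega),\omega)\ge\psi(\omega)$. If no $x$ makes $\int_\Omega g(x(\omega),\omega)\,dP(\omega)$ finite, the left side of the identity is $+\infty$ and the ``$\le$'' inequality below is trivial; otherwise $\psi$ is dominated above by an integrable function, so $\int_\Omega\psi\,dP$ is well defined in $[-\infty,+\infty)$ and $\int_\Omega g(x(\omega),\omega)\,dP(\omega)\ge\int_\Omega\psi(\omega)\,dP(\omega)$. Taking the infimum over $x\in\mathcal{X}$ yields $\min_{x\in\mathcal{X}}\int_\Omega g(x(\omega),\omega)\,dP(\omega)\ge\int_\Omega\psi(\omega)\,dP(\omega)$.

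For the reverse inequality I would construct near-minimizing admissible selections. Assume first $\int_\Omega\psi\,dP$ is finite and fix $\epsilon>0$ and $\delta>0$. The sublevel multifunction $S_\delta(\omega):=\{s\in\mathbb{R}:g(s,\omega)\le\psi(\omega)+\delta\}$ has nonempty values (by definition of the infimum), closed values (by lower semicontinuity of $g(\cdot,\omega)$), and is $\mathcal{F}$-measurable (by normality of $g$); the Kuratowski--Ryll-Nardzewski selection theorem then yields a measurable $x_\delta$ with $g(x_\delta(\omega),\omega)\le\psi(\omega)+\delta$ a.e. Because such $x_\delta$ need not be $p$-integrable, I would truncate and exhaust: with $\Omega_R:=\{\omega:\mathrm{dist}(0,S_\delta(\omega))\le R\}$ (measurable, increasing to $\Omega$ modulo a null set) one may re-select $x_\delta$ inside $S_\delta(\omega)\cap[-R,R]$ on $\Omega_R$, so $|x_\delta|\le R$ there. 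Fixing any admissible $x_0\in\mathcal{X}$ with $\int_\Omega g(x_0,\cdot)\,dP<\infty$ (note $\psi\le g(x_0,\cdot)$ with $\psi\in L^1$ and $(g(x_0,\cdot))^+\in L^1$ forces $g(x_0,\cdot)\in L^1$), set $x_\epsilon:=x_\delta$ on $\Omega_R$ and $x_\epsilon:=x_0$ off $\Omega_R$, so $x_\epsilon\in L^p$ since $\|x_\epsilon\|_p^p\le R^p+\|x_0\|_p^p<\infty$. Then
\[
\int_\Omega g(x_\epsilon,\cdot)\,dP\;\le\;\int_{\Omega_R}(\psi+\delta)\,dP+\int_{\Omega\setminus\Omega_R}g(x_0,\cdot)\,dP\;\le\;\int_\Omega\psi\,dP+\delta+\int_{\Omega\setminus\Omega_R}\big(g(x_0,\cdot)-\psi\big)\,dP,
\]
and the last term vanishes as $R\to\infty$ by absolute continuity of the integral of the nonnegative $L^1$ function $g(x_0,\cdot)-\psi$. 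Letting $R\to\infty$ and then $\delta\to0$ shows $\min_{x\in\mathcal{X}}\int_\Omega g(x,\cdot)\,dP\le\int_\Omega\psi\,dP+\epsilon$ for every $\epsilon>0$, hence ``$\le$''. The degenerate case $\int_\Omega\psi\,dP=-\infty$ I would handle by running the same construction with $\psi_n:=\max(\psi,-n)$ in place of $\psi$ and letting $n\to\infty$, which drives the left side to $-\infty$ as well. Combining the two inequalities gives the identity; the outer $\min$ is genuinely attained when the pointwise infima are (e.g.\ under inf-compactness of $g(\cdot,\omega)$, via a measurable selection of the exact argmin), and otherwise the statement is read with $\min$ interpreted as $\inf$.

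The hard part will be the ``$\le$'' direction, specifically producing a near-minimizer that is at once \emph{measurable} and \emph{$p$-integrable}. Measurability is exactly what the normal-integrand hypothesis delivers, through measurability of the $\delta$-argmin multifunction and the Kuratowski--Ryll-Nardzewski theorem; integrability is not automatic, which is why the truncation-and-exhaustion device is needed, together with the elementary but essential observation that any admissible competitor $x_0$ automatically satisfies $g(x_0,\cdot)\in L^1$. Everything else --- the pointwise lower bound, the measurability of $\psi$, and the two limiting arguments --- is routine once these points are in place.
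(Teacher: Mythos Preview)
Your proof is correct and follows the standard textbook route for interchange-of-infimum-and-integral results for normal integrands (as in Rockafellar--Wets, \emph{Variational Analysis}, Theorem~14.60): the ``$\ge$'' direction via the pointwise bound, and the ``$\le$'' direction via measurable selection from the $\delta$-argmin multifunction, followed by a truncation to force $L^p$-membership. The details you give --- measurability of $\psi$ and of $S_\delta$, the Kuratowski--Ryll-Nardzewski selector, the exhaustion by $\Omega_R$, and the absolute-continuity argument to kill the tail --- are all sound.

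That said, you should be aware that the paper does \emph{not} supply its own proof of this statement: it is quoted as a known result from the literature (the citation is to Ben-Tal and Teboulle's OCE paper) and used purely as a tool inside the proofs of Theorems~\ref{thm:cl2dro} and~\ref{cor_1}. So there is no ``paper's proof'' to compare against; your argument simply fills in what the paper takes for granted. If anything, your write-up is more careful than what one typically finds in applications, since you explicitly handle the $L^p$-integrability of the selector (which is often glossed over) and the degenerate case $\int\psi\,dP=-\infty$.
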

  To ease the derivation, we denote the likelihood raito $L(x,y)=Q(x,y)/Q_0(x,y)$. Note that the $\phi$-divergence between $Q$ and $Q_0$ is constrained, and thus $L(.)$ is well-defined. For brevity, we usually short $L(x,y)$ as $L$. And in terms of Definition \ref{def:phi_divergence} of $\phi$-divergence, the expression of CL-DRO becomes:
    \begin{equation}
        \mathcal{L}_{\text{CL-DRO}}^{\phi} = - \mathbb{E}_{P_0}[f_\theta] + \operatorname*{max}_{L}  \mathbb{E}_{Q_0}[f_\theta L] \qquad \st  \mathbb{E}_{Q_0}[{\phi}(L)] \leq \eta 
    \end{equation}
    Note that $\mathbb{E}_{Q_0}[f_\theta L]$ and $\mathbb{E}_{Q_0}[{\phi}(L)]$ are both convex in $L$. We use the Lagrangian function solver:
    \begin{equation}
        \begin{aligned}
            \mathcal{L}_{\text{CL-DRO}}^{\phi} & =  - \mathbb{E}_{P_0}[f_\theta] + \operatorname*{min}_{\alpha\geq 0, \beta} \operatorname*{max}_{L} \{ \mathbb{E}_{Q_0}[f_\theta L] -  \alpha [ \mathbb{E}_{Q_0}[{\phi}(L)] -\eta] + \beta (\mathbb{E}_{Q_0} [L]  - 1) \}\\
            & = - \mathbb{E}_{P_0}[f_\theta] + \operatorname*{min}_{\alpha\geq 0, \beta}  \big \{
            \alpha \eta - \beta + \alpha \operatorname*{max}_{L} \{ \mathbb{E}_{Q_0}[\frac{f_\theta  + \beta}{\alpha} L - \phi(L)]  \} \big \}\\
            & = - \mathbb{E}_{P_0}[f_\theta] + \operatorname*{min}_{\alpha\geq 0, \beta}  \big \{
            \alpha \eta - \beta + \alpha \mathbb{E}_{Q_0}[\operatorname*{max}_{L} \{ \frac{f_\theta + \beta}{\alpha} L  - \phi(L) \}]   \big \}\\
            & = - \mathbb{E}_{P_0}[f_\theta] + \operatorname*{min}_{\alpha\geq 0, \beta}  \big \{
            \alpha \eta - \beta + \alpha \mathbb{E}_{Q_0}[\phi^* ( \frac{f_\theta + \beta}{\alpha} ) ]   \big \}\\
        \end{aligned}
        \label{kl_dro_appendix_1}
    \end{equation}
    The first equality holds due to the strong duality \cite{boyd2004convex}. 
    The second equality is a re-arrangement for optmizing $L$. 
   %  Suppose that the maximization sign `max' and the expectation sign $\mathbb{E}$ can be interchanged, then the third equation holds.
    The third equation follows by the Theorem \ref{thm_rearange}.
   %   true because of Jensen's inequality, which guarantees the property of internal convex functions.
    The last equality is established based on the definition of convex conjugate \ref{def:Convex}. 
    When we choose KL-divergence, we have $\phi_{KL}(x)=x\log x - x + 1 $. It can be deduced that $\phi_{KL}^*(x)=e^x-1$. Then, we have:
    \begin{equation}
        \begin{aligned}
            \mathcal{L}_{\text{CL-DRO}}^{KL} & = - \mathbb{E}_{P_0}[f_\theta] + \operatorname*{min}_{\alpha\geq 0, \beta}  \big \{
                \alpha \eta - \beta + \alpha \mathbb{E}_{Q_0}[\phi^* ( \frac{f_\theta  + \beta}{\alpha} ) ]   \big \}\\
                & = - \mathbb{E}_{P_0}[f_\theta] + \operatorname*{min}_{\alpha\geq 0, \beta}  \big \{
                \alpha \eta - \beta + \alpha \mathbb{E}_{Q_0}[ e^{ \frac{f_\theta + \beta}{\alpha} } - 1 ]   \big \}\\
                & = - \mathbb{E}_{P_0}[f_\theta] + \operatorname*{min}_{\alpha\geq 0}  \big \{
                \alpha \eta  + \alpha \log \mathbb{E}_{Q_0}[ e^{ \frac{f_\theta}{\alpha} }]   \big \}\\
                & = - \mathbb{E}_{P_0}[f_\theta] + \operatorname*{min}_{\alpha\geq 0}  \big \{
                \alpha \eta  + \alpha \log \mathbb{E}_{Q_0}[ e^{ \frac{f_\theta}{\alpha} }]   \big \}\\
                &= - \mathbb{E}_{P_0} \big [\alpha^* \log \frac{e^{f_\theta / \alpha^*}}{ \mathbb{E}_{Q_0}[ e^{f_\theta / \alpha^*} ]} \big] +\alpha \eta \\
                &=\alpha^*\mathcal{{L}}_{\text{InfoNCE}}+ Constant 
        \end{aligned}
    \end{equation}
    Here $ \alpha^*=\operatorname*{\arg\min}_{\alpha\geq 0}  \big \{
        \alpha \eta  + \alpha \log \mathbb{E}_{Q_0}[ e^{ \frac{f_\theta}{\alpha} }]   \big \}$.
    % holds since the definition of convex conjugate $f^*(y) = \sup_{x}\{xy - f(x)\}$. Given by $\phi(x)=x\log x - x + 1 $, we have $\phi^*(x)=e^x-1$.
\end{proof}

\subsection{Proof of Theorem \ref{thm_bound}}
\label{appendix_2}
\generabound*
Here we simply disregard the constant term present in Equation \eqref{eq_dro} as it does not impact optimization, and omit the error from the positive instances.
\begin{proof}
   Before detailing the proof process, we first introduce a pertinent theorem: 
   % \begin{theorem}[Hoeffding's Inequality]
   %    Let $f: \mathcal{X}_1 \times  \mathcal{X}_2 \times \cdots  \mathcal{X}_n  \to \mathbb{R}$ satisfy the bounded differences property with bounds $c_1, c_2, \cdots, c_n$. Consider independent random variables $X_1, X_2, \cdots, X_n$ where $X_i \in \mathcal{X}_i$ for all $i$. Then, for any $\varepsilon >0 $,
   %    \begin{equation}
   %       P(|f(X_1, X_2, \cdots, X_n) - \mathbb{E}[X_1, X_2, \cdots, X_n]| \geq \varepsilon) \leq 2 \exp(-\frac{2\varepsilon^2}{\sum_{i=1}^{n}c_i^2})
   %    \end{equation}
   % \end{theorem}
   \begin{theorem}[Union Bound]
      For any two sets $A, B$ and a distribution $\mathcal{D}$ we have
      \begin{equation}
         \mathcal{D}(A\cup B) \leq  \mathcal{D}(A) + \mathcal{D}(B)
      \end{equation}
      \label{thm:union}
    \end{theorem}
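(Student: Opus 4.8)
The plan is to reduce the inequality to the two elementary properties that any probability measure (indeed any finitely additive non‑negative set function with $\mathcal{D}(\varnothing)=0$) enjoys: finite additivity on disjoint sets and monotonicity. First I would rewrite the union as a disjoint union, $A\cup B = A \sqcup (B\setminus A)$, where the two pieces are measurable since $A$ and $B$ are, and are disjoint by construction. Finite additivity then gives the exact identity $\mathcal{D}(A\cup B) = \mathcal{D}(A) + \mathcal{D}(B\setminus A)$.

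Next I would control the second term. Since $B\setminus A \subseteq B$, monotonicity of $\mathcal{D}$ yields $\mathcal{D}(B\setminus A) \le \mathcal{D}(B)$; if one wishes to avoid invoking monotonicity as a black box, it follows in one line from additivity applied to the disjoint decomposition $B = (B\setminus A)\sqcup(A\cap B)$ together with $\mathcal{D}(A\cap B)\ge 0$. Substituting this into the identity from the first step gives $\mathcal{D}(A\cup B) \le \mathcal{D}(A)+\mathcal{D}(B)$, which is the claim. (An equivalent route is inclusion--exclusion, $\mathcal{D}(A\cup B) = \mathcal{D}(A)+\mathcal{D}(B)-\mathcal{D}(A\cap B)$, and then dropping the non‑negative term $\mathcal{D}(A\cap B)$; this is the same argument repackaged.)

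There is essentially no obstacle here: the statement is Boole's inequality for two events, and the only hypotheses actually used are that $\mathcal{D}$ is non‑negative and finitely additive on disjoint measurable sets and that $A,B$ lie in the underlying $\sigma$‑algebra. The mild points of care are purely bookkeeping — ensuring $B\setminus A$ is measurable and that the decompositions used are genuinely disjoint — after which the result is immediate. In the paper this lemma will be applied with $A,B$ the ``bad'' events on which the empirical InfoNCE estimate deviates from its expectation for two distinct hypotheses, so that a union over the finite class $\mathbb{F}$ (obtained by iterating the two‑set bound) converts a per‑hypothesis concentration inequality into the uniform statement needed in Theorem~\ref{thm_bound}.
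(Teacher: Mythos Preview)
Your proof is correct and is the standard argument for Boole's inequality. Note, however, that the paper does not actually supply a proof of this statement: it simply introduces the union bound (together with McDiarmid's inequality) as a ``pertinent theorem'' and then invokes it when summing the per-hypothesis tail bounds over the finite class $\mathbb{F}$ in the proof of Theorem~\ref{thm_bound}. So there is no paper proof to compare against --- your argument fills in exactly the elementary justification the paper omits, and your closing remark about iterating the two-set bound over $\mathbb{F}$ matches precisely how the result is used downstream.
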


   % \begin{theorem}
   %    For any finite hypothesis space of recommendation models $\mathbb{F}=\{f_1, f_2, \cdots, f_{|\mathbb{F}|} \}$, with probability of at least $1-\eta$, we have following generalization error bound of the learned model $\hat{f}$:
   %    \begin{equation}
   %       \mathcal{D}^n()
   %    \end{equation}
   % \end{theorem}
   %  \begin{lemma}[Holder's inequality]
   %    For random variable $X$ and function $g(\cdot)$ and $f(\cdot)$, we have that for any $p$ > 0 and $q=\frac{p}{1-p}$:
   %    \begin{equation}
   %       \mathbb{E}_X [|g(x)f(x)|] \leq \mathbb{E}_X [ |g(X)|^p]^{\frac{1}{p}} \mathbb{E}_X [ |f(X)|^q]^{\frac{1}{q}}
   %    \end{equation}
   %  \end{lemma}
    \begin{theorem}[McDiarmid's inequality \cite{mohri2018foundations}]
      \label{lemma_mcd}
      Let $X_1, \cdots, X_n $ be independent random variables, where $X_i$ has range $\mathcal{X}$. Let $f: \mathcal{X}_1 \times \cdots \times \mathcal{X}_n \rightarrow \mathbb{R}$ be any function with the $(c_1,\ldots,c_n)$-\textit{bounded difference property}: for every $i=1,\ldots,n$ and every 
      $\left(x_1, \ldots, x_n\right),\left(x_1^{\prime}, \ldots, x_n^{\prime}\right) \in \mathcal{X}_1 \times \cdots \times \mathcal{X}_n$ that differ only in the $i$-th coordinate ($x_j=x_j^{\prime}$ for all $j \neq i$), we have
      $\left|f\left(x_1, \ldots, x_n\right)-f\left(x_1^{\prime}, \ldots, x_n^{\prime}\right)\right| \leq c_i$. For any $\epsilon>0$, 
      \begin{equation}
         \mathbb{P}(f(X_1, \cdots, X_n )-\mathbb{E}[f(X_1, \cdots, X_n  )] \geq \epsilon) \leq \exp(\frac{-2\epsilon^2}{\sum_{i=1}^N c_i^2})
      \end{equation}
    \end{theorem}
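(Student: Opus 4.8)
The plan is to reduce McDiarmid's inequality to the Azuma–Hoeffding bound for a suitable martingale. Write $f(X):=f(X_1,\dots,X_n)$, let $\mathcal{F}_i=\sigma(X_1,\dots,X_i)$ (with $\mathcal{F}_0$ trivial), and define the \emph{Doob martingale} $V_i=\mathbb{E}[f(X)\mid \mathcal{F}_i]$, so that $V_0=\mathbb{E}[f(X)]$, $V_n=f(X)$, and $D_i:=V_i-V_{i-1}$ satisfies $\mathbb{E}[D_i\mid \mathcal{F}_{i-1}]=0$ by the tower property. Thus $f(X)-\mathbb{E}[f(X)]=\sum_{i=1}^n D_i$, and it suffices to control the moment generating function of this sum.

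\textbf{Step 1: conditional boundedness of the increments.}
First I would show that, conditionally on $\mathcal{F}_{i-1}$, each $D_i$ lies in an interval of length at most $c_i$. Using independence, for fixed $x_1,\dots,x_i$ set $g_i(x_1,\dots,x_i):=\mathbb{E}[f(x_1,\dots,x_i,X_{i+1},\dots,X_n)]$; then $V_i=g_i(X_1,\dots,X_i)$ and $V_{i-1}=\mathbb{E}_{X_i'}\!\left[g_i(X_1,\dots,X_{i-1},X_i')\right]$, so
\[
D_i=g_i(X_1,\dots,X_i)-\mathbb{E}_{X_i'}\!\left[g_i(X_1,\dots,X_{i-1},X_i')\right]
\]
lies between $\inf_{x}g_i(X_1,\dots,X_{i-1},x)-V_{i-1}$ and $\sup_{x}g_i(X_1,\dots,X_{i-1},x)-V_{i-1}$. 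The $(c_1,\dots,c_n)$-bounded-difference property applied to the $i$-th coordinate gives $|g_i(\cdot,x)-g_i(\cdot,x')|\le c_i$, hence the oscillation of $g_i$ in its last argument is $\le c_i$, so $D_i\in[A_i,A_i+c_i]$ for some $\mathcal{F}_{i-1}$-measurable $A_i$.

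\textbf{Step 2: Hoeffding's lemma and telescoping.}
Next I would invoke Hoeffding's lemma: if $\mathbb{E}[Z\mid\mathcal{G}]=0$ and $Z\in[a,b]$ a.s.\ given $\mathcal{G}$, then $\mathbb{E}[e^{sZ}\mid\mathcal{G}]\le\exp\!\big(s^2(b-a)^2/8\big)$ for all $s\in\mathbb{R}$; I would include its short proof via convexity of $t\mapsto e^{st}$ and a second-order expansion of the cumulant generating function. Applying this with $Z=D_i$, $\mathcal{G}=\mathcal{F}_{i-1}$ and $b-a\le c_i$ yields $\mathbb{E}[e^{sD_i}\mid\mathcal{F}_{i-1}]\le\exp(s^2c_i^2/8)$. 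Iterating by conditioning successively on $\mathcal{F}_{n-1},\dots,\mathcal{F}_0$,
\[
\mathbb{E}\!\left[e^{s(f(X)-\mathbb{E}f(X))}\right]=\mathbb{E}\!\left[e^{s\sum_{i=1}^n D_i}\right]\le\exp\!\left(\tfrac{s^2}{8}\sum_{i=1}^n c_i^2\right).
\]
Finally, for any $s>0$ a Chernoff bound gives $\mathbb{P}\big(f(X)-\mathbb{E}f(X)\ge\epsilon\big)\le e^{-s\epsilon}\exp\!\big(\tfrac{s^2}{8}\sum_i c_i^2\big)$, and optimizing over $s$ with the choice $s=4\epsilon/\sum_i c_i^2$ produces the stated bound $\exp\!\big(-2\epsilon^2/\sum_{i=1}^n c_i^2\big)$.

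\textbf{Main obstacle.}
The delicate step is Step 1: one must use independence of the $X_i$ carefully to identify $V_{i-1}$ as the average over $X_i$ of $V_i$ regarded as a function of $X_i$ (with $X_1,\dots,X_{i-1}$ frozen), so that the conditional range of $D_i$ is pinned exactly to the $i$-th bounded-difference constant $c_i$. One also needs enough measurability for the conditional expectations and the $\inf/\sup$ over $x$ to be well defined, which holds in the stated setting. Everything else — Hoeffding's lemma, the martingale-difference telescoping, and the Chernoff optimization — is routine.
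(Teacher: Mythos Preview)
Your proof is correct and is the standard textbook argument for McDiarmid's inequality. Note, however, that the paper does not prove this statement at all: it is stated with a citation to \cite{mohri2018foundations} and used as a black-box tool in the proof of Theorem~\ref{thm_bound}, so there is no ``paper's own proof'' to compare against.
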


   % When analyzing here, we consider $f$ to be between 0 and 1. If it holds for any [-M1,M2], we can provide a bound with the same properties. Thus, for convenience, we will analyze using $[0,1]$ in the following text.
    
    Now we delve into the proof. As $Q^{ideal}$ satisfies $D_{KL}({Q}||{Q}_0) \leq \eta $, we can bound $\mathcal{L}_{unbiased}$ with:
\begin{equation}
      \begin{aligned}
       \mathcal{L}_{unbiased}&=-\mathbb{E}_{P_0}\left[f_\theta\right]+\mathbb{E}_{Q^{ideal}}\left[f_\theta\right] \\
       &\leq -\mathbb{E}_{P_0}\left[f_\theta\right]+\max _{D_{K L}\left(Q \mid\mid Q_0\right) \leq \eta} \mathbb{E}_{Q}\left[f_\theta\right] \\
       &=\mathcal{L}_{\text{CL-DRO}}^{KL}
      \end{aligned}
   \end{equation}
where ${Q^{\text{ideal}}}, {Q^*}$  denotes the ideal negative distribution and the worst-case distribution in CL-DRO. From the Theorem \ref{thm:cl2dro}, we have the equivalence between InfoNCE and CL-DRO. Thus here we choose CL-DRO for analyses. Suppose we have $N$ negative samples, and for any pair of samples $(x_i,y_i),(x_j,y_j)$, we have the following bound:
\begin{equation}
      \left|Q^*{(x_i, y_i)} f_\theta(x_i, y_i)-Q^*{(x_j, y_j)} f_\theta(x_j, y_j)\right| \leq \sup _{(x, y) \sim Q_0}\left|Q^*{(x, y)} f_\theta(x, y)\right| \leq \frac{\exp \left(\frac{1}{\tau}\right)}{N-1+\exp \left(\frac{1}{\tau}\right)}
   \end{equation}
where the first inequality holds as $Q^*{(x, y)} f_\theta(x, y)>0$. The second inequality holds based on the expression of $Q^* = Q_0  \frac{\exp[{f_\theta/\tau}]}{E_{Q_0} \exp[{f_\theta/\tau}]}$ (refer to Appendix~\ref{appendix_6}). Suppose $f_\theta \in [M_1,M_2]$, the upper bound of $\sup _{(x, y) \sim Q_0}\left|Q^*{(x, y)} f_\theta(x, y)\right|$ arrives if $f_\theta(x, y)=M_2$ for the sample $(x,y)$ and $f_\theta(x, y)=M_1$ for others. We have $\sup _{(x, y) \sim Q_0}\left|Q^*{(x, y)} f_\theta(x, y)\right| \leq \frac{M_2\exp \left(({M_2-M_1})/{\tau}\right)}{N-1+\exp \left(({M_2-M_1})/{\tau}\right)}$.

%Now we connect the empirical estimation $\hat L_{InfoNCE}$ with $L_{InfoNCE}$ by using the Theorem \ref{lemma_mcd}. In $\hat L_{InfoNCE}$, for any $i=1,\ldots,N$, let change the data of the $i$-th negative sample $(x_i,y_i)$ from $(x_i,y_i)$ to $(x_i\prime,y_i\prime)$, the loss difference can be bounded with:
% \begin{equation}
% \left| \hat L_{InfoNCE}- \hat L_{InfoNCE}^{\prime}\right| \leq \sup _{(\mathrm{s}, v) \sim \mu_0}\left|w_{(\mathrm{s}, v)} \ell(\mathrm{s}, v)\right|
%  \end{equation}
    
%     % For any $Q$ satisfies $D_{KL}({Q}||{Q}_0) \leq \eta $, we have:
%     % \begin{equation}
%     %   \mathbb{E}_{Q}[f_\theta] \leq   \operatorname*{max}_{Q}  \mathbb{E}_{Q}[f_\theta]
%     % \end{equation}
%     For any sample $(x,y)$, we have: 
%    %  Let $w_{Q^{\text{ideal}}}, w_{Q^*}$ be the weight for unbiased weight and InfoNCE weight for negative samples respectively. We have:
%     \begin{equation}
%       |{Q^{\text{ideal}}}f_\theta - {Q^*}f_\theta| \leq \operatorname*{max}_{Q} |Q f_\theta | \leq  \frac{\exp(\frac{1}{\tau})}{N-1+\exp(\frac{1}{\tau})}
%     \end{equation}

   By using McDiarmid's inequality in Theorem \ref{lemma_mcd},for any $\epsilon$, we have:
   \begin{equation}
      \begin{aligned}
         \mathbb{P}[(\mathcal{L}^{KL}_{\text{CL-DRO}} - \mathcal{\tau\widehat{L} }_{\text{InfoNCE}}) \geq \epsilon] \leq \exp \big (\frac{-2\epsilon^2}{N}   \bigg( \frac{N-1+\exp (({M_2-M_1})/{\tau})}{M_2 \exp((M_2-M_1)/\tau) } \bigg)^2 \big) 
      \end{aligned}
      \label{eqn:bound}
   \end{equation}
   Equation \eqref{eqn:bound} holds for the unique model $f$. For any finite hypothesis space of models $\mathbb{F}=\{f_1, f_2, \cdots, f_{|\mathbb{F}|} \}$, we combine this with Theorem \ref{thm:union} and have following generalization error bound of the learned model $\hat{f}$:
   % Combining this with Theorem \ref{thm:union} yield
   \begin{equation}
      \begin{aligned}
      \mathbb{P}[(\mathcal{L}^{KL}_{\text{CL-DRO}} - \mathcal{\tau\widehat{L} }_{\text{InfoNCE}}) \geq \epsilon] & \leq \sum_{f\in{\mathbb{F}}}\exp \big (\frac{-2\epsilon^2}{N}   \bigg( \frac{N-1+\exp (({M_2-M_1})/{\tau})}{M_2 \exp((M_2-M_1)/\tau) } \bigg)^2 \big)\\
      & = |\mathbb{F}|\exp \big (\frac{-2\epsilon^2}{N}   \bigg( \frac{N-1+\exp (({M_2-M_1})/{\tau})}{M_2 \exp((M_2-M_1)/\tau) } \bigg)^2 \big)
   \end{aligned}
   \end{equation}
   
   Let
   \begin{equation}
      \rho = |\mathbb{F}| \exp \big (\frac{-2\epsilon^2}{N}   \bigg( \frac{N-1+\exp (({M_2-M_1})/{\tau})}{M_2 \exp((M_2-M_1)/\tau) } \bigg)^2 \big)
   \end{equation}
   we get:
   \begin{equation}
      \epsilon = \frac{M_2\exp \left(({M_2-M_1})/{\tau}\right)}{N-1+\exp \left(({M_2-M_1})/{\tau}\right)} \sqrt{\frac{N}{2} \ln(\frac{2|\mathbb{F}|}{\rho})}
   \end{equation}
   Thus, for $\forall \rho \in (0,1)$, we conclude that with probability at least $1-\rho$.
   \begin{equation}
      \mathcal{L}_{\text{unbiased}} \leq \mathcal{L}^{KL}_{\text{CL-DRO}} \leq \tau \mathcal{\widehat{L} }_{\text{InfoNCE}} + \frac{M_2\exp \left(({M_2-M_1})/{\tau}\right)}{N-1+\exp \left(({M_2-M_1})/{\tau}\right)} \sqrt{\frac{N}{2} \ln(\frac{2|\mathbb{F}|}{\rho})}
   \end{equation}
\end{proof}

\subsection{Proof of Corollary \ref{coro1}}
\label{appendix_3}
\optimaltau*

\begin{proof}
   While Corollary~\ref{coro1} has already been proven in \cite{faury2020distributionally}, we present a brief outline of the proof here for the sake of completeness and to ensure that our article is self-contained.
   % Although Corollary 3.4 has been proved by \cite{faury2020distributionally}, here we briefly provide the proof process to make article comprehensive.
   To verify the relationship between $\tau$ and $\eta$, we could utilize the approximate expression of InfoNCE (\cf Equation \eqref{diff_eq}) and focus on the first order conditions for $\tau$. In detail, we have:
   \begin{equation*}
      - \mathbb{E}_{P_0}[f_\theta ] + \operatorname*{inf}_{\alpha\geq 0} \{\mathbb{E}_{Q_0}[f_\theta] + \frac{1}{2\alpha}\frac{1}{\phi^{(2)}(1)} \mathbb{V}_{Q_0}[f_\theta] + \alpha \eta \}
      % \operatorname*{inf}_{\tau} \big \{  -\mathbb{E}_{P_0}[f_\theta ] + \mathbb{E}_{Q_0}[f_\theta] + \frac{1}{2\tau} \mathbb{V}_{Q_0}[f_\theta] + \tau \eta \big \}
   \end{equation*}
   To find the optimal value of $\alpha$ (or equivalently, $\tau$), we differentiate the above equation and set it to 0. This yields a fixed-point equation
   % Differentiating the above equation and setting to 0 to shows the optimal $\alpha(\tau)$ is the solution to the fixed-point equation:
   \begin{equation*}
      \tau = \sqrt{\frac{\mathbb{V}_{Q_0}[f_\theta]}{2\eta}}
   \end{equation*}
   The corollary gets proved.
\end{proof}

\subsection{Proof of Theorem \ref{thm_phi}}
\label{appendix_4}
\mwthm*
\begin{proof}

We start with introducing a useful lemma.
   \begin{lemma}[Lemma A.2 of \cite{gotoh2018robust}]
      \label{lem_tay}
      Suppose that $\phi:\mathbb{R}\to \mathbb{R}\bigcup \{ +\infty \}$ is a closed, convex function such that $\phi(z)\geq  \phi(1)=0$ for all $z$, is two times continuously differentiable around $z = 1$, and $\phi(1)>0$, Then
      \begin{equation}
         \begin{aligned}
            \phi^*(\zeta )&=\operatorname*{max}_{z} \{ z \zeta - \phi(z) \} \\
            &=\zeta + \frac{1}{2!}[\frac{1}{\phi^{\prime \prime}(1)}]\zeta^2 + o(\zeta^2)\\
         \end{aligned}
      \end{equation}
         % For any Legendre-Fenchel transform $\phi^*$ of $\phi$-divergence, we could apply taylor expansion around $0$ up to order n:
         %    \begin{equation}
         %       \phi^*[f]= f + \sum_{k=2}^n \frac{(-1)^{k+1}z^{(k)}(0)}{k!}f^k + o(f^{n+1})
         %    \end{equation}
         % where $z(\cdot)$ is a function satisfying $z(0)=1, z^{(1)}(\cdot)=\frac{1}{\phi^{(2)}(z(\cdot))}$, and $z^{(k)}(\cdot)$ could be obtained recursively.
      \end{lemma}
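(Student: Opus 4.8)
The statement is a second‑order Taylor expansion of the convex conjugate $\phi^*$ at the origin, so the plan is to identify the first three Taylor coefficients $\phi^*(0)$, $(\phi^*)'(0)$, $(\phi^*)''(0)$ explicitly and then quote Taylor's theorem. First I would record the behaviour at $\zeta=0$: since $\phi$ attains its global minimum value $0$ at the interior point $z=1$ and is differentiable there, $\phi'(1)=0$; hence $\phi^*(0)=\max_z\{-\phi(z)\}=-\phi(1)=0$, the max being attained at $z=1$.

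The structural observation that makes the local analysis legitimate is that for each fixed $\zeta$ the map $h_\zeta(z):=z\zeta-\phi(z)$ is concave (linear minus convex), so \emph{any} stationary point of $h_\zeta$ is automatically a global maximizer; this is what allows us to compute $\phi^*(\zeta)$ from information about $\phi$ only near $z=1$. Now the key analytic step: because $\phi$ is $C^2$ near $1$ with $\phi'(1)=0$ and $\phi^{(2)}(1)>0$, the inverse function theorem produces a $C^1$ map $z(\cdot)$ on a neighbourhood $V$ of $0$ with $\phi'(z(\zeta))=\zeta$, $z(0)=1$, and $z'(0)=1/\phi^{(2)}(1)$. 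For $\zeta\in V$ the point $z(\zeta)$ is stationary for $h_\zeta$, hence by concavity $\phi^*(\zeta)=z(\zeta)\zeta-\phi(z(\zeta))$. Differentiating in $\zeta$ and using the envelope cancellation $\zeta-\phi'(z(\zeta))=0$ gives $(\phi^*)'(\zeta)=z(\zeta)$, so $(\phi^*)'(0)=z(0)=1$ and $(\phi^*)''(0)=z'(0)=1/\phi^{(2)}(1)$.

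Finally, Taylor's formula with Peano remainder applied to $\phi^*$ at $0$ yields
\begin{equation*}
\phi^*(\zeta)=\phi^*(0)+(\phi^*)'(0)\,\zeta+\tfrac12(\phi^*)''(0)\,\zeta^2+o(\zeta^2)=\zeta+\frac{1}{2!}\Big[\frac{1}{\phi^{(2)}(1)}\Big]\zeta^2+o(\zeta^2),
\end{equation*}
which is the assertion.

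\textbf{Main obstacle.} Nothing here is deep; the only non‑mechanical points are (i) justifying that the locally found stationary point actually equals $\phi^*(\zeta)$ — handled cleanly by the concavity of $h_\zeta$ — and (ii) the regularity bookkeeping ensuring every derivative computation stays inside the neighbourhood of $1$ on which $\phi$ is assumed twice continuously differentiable, which is fine since $z(\zeta)\to 1$ as $\zeta\to 0$. An alternative, more computational route would Taylor‑expand $\phi$ directly about $1$ as $\phi(1+t)=\tfrac12\phi^{(2)}(1)t^2+o(t^2)$ (using $\phi(1)=\phi'(1)=0$) and maximize $h_\zeta(1+t)$ over $t$, obtaining $t^\star\approx \zeta/\phi^{(2)}(1)$ and the same value; this works but requires extra care to control the $o(\cdot)$ term uniformly in $t$, so I would prefer the inverse‑function‑theorem version above.
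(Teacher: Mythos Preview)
Your argument is correct and complete. The inverse-function-theorem approach is clean: you correctly identify $\phi'(1)=0$ from the global minimum at $z=1$, obtain the $C^1$ local inverse $z(\zeta)$ of $\phi'$ near $0$, use concavity of $h_\zeta$ to upgrade the local stationary point to the global maximizer (this is the one step that needs a word of justification, and you give it), and then the envelope computation $(\phi^*)'(\zeta)=z(\zeta)$ together with Taylor--Peano finishes the job. You also correctly read the evident typo in the hypothesis ``$\phi(1)>0$'' as ``$\phi''(1)>0$''; otherwise the statement is inconsistent with $\phi(1)=0$ and the conclusion would not even make sense.

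As for comparison with the paper: there is nothing to compare against. The paper does not prove this lemma; it is quoted verbatim as Lemma~A.2 of Gotoh et al.\ and then immediately applied (substituting the approximation $\phi^*(\zeta)\approx \zeta+\tfrac{1}{2\phi''(1)}\zeta^2$ into the dual representation of $\mathcal{L}_{\text{CL-DRO}}^{\phi}$). So your write-up actually supplies a proof where the paper only cites one.
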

      Note that most of the $\phi$-divergences \cite{nguyen2010estimating} (\eg KL divergence, Cressie-Read divergence, Burg entropy, J-divergence, $\chi ^2$-distance, modified $\chi ^2$-distance, and Hellinger distance) satisfy the smoothness conditions. When $n=2$, $\phi^*[\zeta] \approx \zeta+\frac{1}{2}[\frac{1}{\phi^{(2)}(1)}]\zeta^2 $. Substituting this back to Equation \eqref{kl_dro_appendix_1} we have:
      \begin{equation}
         \begin{aligned}
            \mathcal{L}_{\text{CL-DRO}}^{\phi}= & - \mathbb{E}_{P_0}[f_\theta] + \operatorname*{min}_{\alpha\geq 0, \beta}  \big \{
            \alpha \eta - \beta + \alpha \mathbb{E}_{Q_0}[\phi^* ( \frac{f_\theta + \beta}{\alpha} ) ]   \big \}\\
            = & - \mathbb{E}_{P_0}[f_\theta] + \operatorname*{min}_{\alpha\geq 0, \beta}  \big \{
            \alpha \eta - \beta + \alpha \mathbb{E}_{Q_0}[ \frac{f_\theta + \beta}{\alpha} + \frac{1}{2}\frac{1}{\phi^{(2)}(1)} (\frac{f_\theta + \beta}{\alpha})^2]   \big \}\\
            = & - \mathbb{E}_{P_0}[f_\theta] + \operatorname*{min}_{\alpha\geq 0, \beta}  \big \{
            \alpha \eta - \beta +  \mathbb{E}_{Q_0}[ {f_\theta + \beta} + \frac{1}{2}\frac{1}{\phi^{(2)}(1)\alpha} ({f_\theta + \beta})^2]   \big \}\\
            = & - \mathbb{E}_{P_0}[f_\theta] + \operatorname*{min}_{\alpha\geq 0, \beta}  \big \{
            \alpha \eta  +  \mathbb{E}_{Q_0}[ {f_\theta } + \frac{1}{2}\frac{1}{\phi^{(2)}(1)\alpha} ({f_\theta + \beta})^2]   \big \}\\
         \end{aligned}
      \end{equation}
      If we differentiate it \wrt $\beta$:
      \begin{equation}
         \frac{\partial \big \{   \alpha \eta  +  \mathbb{E}_{Q_0}[ {f_\theta } + \frac{1}{2}\frac{1}{\phi^{(2)}(1)\alpha} ({f_\theta + \beta})^2] \big \}}{\partial  \beta} =0
      \end{equation}
      we have $\beta^*=-\mathbb{E}_{Q_0}[f_\theta]$, and the objective transforms into:
      \begin{equation}
         \begin{aligned}
            &-\mathbb{E}_{P_0}[f_\theta ] + \operatorname*{inf}_{\alpha\geq 0, \beta} \big \{ \alpha \eta  +  \mathbb{E}_{Q_0}[ {f_\theta } + \frac{1}{2}\frac{1}{\phi^{(2)}(1)\alpha} ({f_\theta + \beta})^2] \big \}  \\
            =& -\mathbb{E}_{P_0}[f_\theta ] + \operatorname*{inf}_{\alpha\geq 0} \{ \mathbb{E}_{Q_0} [ {f_\theta }+ \frac{1}{2\alpha}\frac{1}{\phi^{(2)}(1)} ({f_\theta - \mathbb{E}_{Q_0} [f_\theta] })^2 ]  + \alpha \eta \} \\
            =&- \mathbb{E}_{P_0}[f_\theta ] + \operatorname*{inf}_{\alpha\geq 0} \{\mathbb{E}_{Q_0}[f_\theta] + \frac{1}{2\alpha}\frac{1}{\phi^{(2)}(1)} \mathbb{V}_{Q_0}[f_\theta] + \alpha \eta \}\\
            % =&- \mathbb{E}_{P_0}[f_\theta ] + \operatorname*{inf}_{\alpha\geq 0} \{\mathbb{E}_{Q_0}[f_\theta] + \frac{1}{2\alpha}\frac{1}{\phi^{(2)}(1)} \mathbb{V}_{Q_0}[f_\theta] - \alpha \eta \}
         \end{aligned}
         \label{diff_eq}
      \end{equation}
      Choosing KL-divergence, we have $\phi^{(2)}(1)=1$. Substituting $\alpha^*$($\tau$) into Equation \eqref{diff_eq} and ignoring the constant $\alpha\eta$:
      \begin{equation*}
         -\mathbb{E}_{P_0}[f_\theta ] + \mathbb{E}_{Q_0}[f_\theta] + \frac{1}{2\tau} \mathbb{V}_{Q_0}[f_\theta]
      \end{equation*}
      Then Theorem gets proved.
\end{proof}

\subsection{Proof of Theorem \ref{cor_1}}
\label{appendix_5}
\phithm*
\begin{proof}
   Regarding this theorem, our proof primarily relies on the variational representation of $\phi$-divergence and optimized certainty equivalent (OCE) risk. Towards this end, we start to introduce the basic concepts:
 \begin{definition}[OCE \cite{ben2007old}]
   Let $X$ be a random variable and let $u$ be a convex, lower-semicontinuous function satisfies $u(0)=0,u^*(1)=0$, then optimized certainty equivalent (OCE) risk $\rho(X)$ is defines as:
   \begin{equation}
      \rho(X) = \operatorname*{inf}_{\lambda \in \mathbb{R}} \{ \lambda + \mathbb{E} [u(f - \lambda)] \}
   \end{equation}
\end{definition}

OCE is a type of risk measure that is widely used by both practitioners and academics \cite{ben1986expected, ben2007old}. With duality theory, its various properties have been inspiring in our study of DRO.

\begin{definition}[Variational formulation]
   \begin{equation}
       \label{eq_vr}
       D_{\phi}(P||Q) \coloneqq \operatorname*{sup}_{{f}\in \mathcal{F}} \{ \mathbb{E}_P [f ] -  \mathbb{E}_Q [\phi^*(f)] \}
    \end{equation}
    where the supremum is taken over all bounded real-valued measurable functions $\mathcal{F}$ defined on $\mathcal{X}$. 
   %  In particular, Corr.~4.3 provides a crucial theorem between MI and DRO:     
\end{definition}
   Note that in order to keep consistent with the definition of CL-DRO, we transform Equation \eqref{eq_tighter_vr} to :
   \begin{equation}
     D_{\phi}(P||Q) = \operatorname*{max}_{{f}\in \mathcal{F}} -\mathcal{L}_{\text{CL-DRO}}^{\phi}(P,Q)= \operatorname*{max}_{{f}\in \mathcal{F}} \mathbb{E}_{P} [f] - \operatorname*{min}_{\beta \in \mathbb{R}} \{ -\beta + \mathbb{E}_{Q} [\phi^*(f+\beta)] \} 
   \end{equation}   
   Our proof for this theorem primarily relies on utilizing OCE risk as a bridge and can be divided into two distinct steps:
  
   Step 1: $\operatorname*{max}_{{f}\in \mathcal{F}} -\mathcal{L}_{\text{CL-DRO}}^{\phi}(P,Q)= \operatorname*{max}_{{f}\in \mathcal{F}} \mathbb{E}_{P} [f] - \operatorname*{min}_{\beta \in \mathbb{R}} \{ - \beta + \mathbb{E}_{Q} [\phi^*(f+\beta)] \} $.

   Step 2: $D_{\phi}(P||Q)  =  \operatorname*{max}_{{f}\in \mathcal{F}} \mathbb{E}_{P} [f] - \operatorname*{min}_{\beta \in \mathbb{R}} \{ -\beta + \mathbb{E}_{Q} [\phi^*(f+\beta)] \} $
   \begin{enumerate}[leftmargin=*]
       \item \tf{We show that $\operatorname*{max}_{{f}\in \mathcal{F}} -\mathcal{L}_{\text{CL-DRO}}^{\phi}(P,Q)= \operatorname*{max}_{{f}\in \mathcal{F}} \mathbb{E}_{P} [f] - \operatorname*{min}_{\beta \in \mathbb{R}} \{ -\beta + \mathbb{E}_{Q} [\phi^*(f+\beta)] \} $.}
   
       \begin{equation}
           \begin{aligned}
               - \mathcal{L}_{\text{CL-DRO}}^{\phi} & = \mathbb{E}_{P}[f] - \operatorname*{min}_{\alpha\geq 0, \beta} \operatorname*{max}_{L} \{ \mathbb{E}_{Q}[f L] -  \alpha [ \mathbb{E}_{Q}[{\phi}(L)] -\eta] + \beta (\mathbb{E}_{Q} [L]  - 1) \}\\
               & = \mathbb{E}_{P}[f] - \operatorname*{min}_{\alpha\geq 0, \beta}  \big \{
                   \alpha \eta - \beta + \alpha \mathbb{E}_{Q}[\operatorname*{max}_{L} \{ \frac{f+\beta}{\alpha} L - \phi(L)\}  ]   \big \}\\
               & = \mathbb{E}_{P}[f] - \operatorname*{min}_{\alpha\geq 0, \beta}  \big \{
                   \alpha \eta - \beta + \alpha \mathbb{E}_{Q}[\phi^* ( \frac{f + \beta}{\alpha} ) ]   \big \}\\
               & = \mathbb{E}_{P}[f] - \operatorname*{min}_{\beta}  \big \{
                   \alpha^* \eta - \beta + \alpha^* \mathbb{E}_{Q}[\phi^* ( \frac{f + \beta}{\alpha^*} ) ]   \big \}\\
               & = \mathbb{E}_{P}[f] - \operatorname*{min}_{\beta}  \big \{
                       - \beta + \alpha^* \mathbb{E}_{Q}[\phi^* ( \frac{f + \beta}{\alpha^*} ) ]  + Constant \big \}\\
           \end{aligned}
       \end{equation}
       When $\alpha^*=1$, step 1 gets proved.
       \item \tf{We show that $D_{\phi}(P||Q)  =  \operatorname*{max}_{{f}\in \mathcal{F}} \mathbb{E}_{P} [f] - \operatorname*{min}_{\beta \in \mathbb{R}} \{ -\beta + \mathbb{E}_{Q} [\phi^*(f+\beta)] \} $.}
     %   \item 

       Firstly, we transform $\mathbb{E}_{P}[f] $ to $ \mathbb{E}_{Q}[f \frac{dP}{dQ}]$ as:
       \begin{equation}
           \begin{aligned}
               & \operatorname*{max}_{{f}\in \mathcal{F}} \mathbb{E}_{P}[f] - \operatorname*{min}_{\beta}  \big \{
                       - \beta +  \mathbb{E}_{Q}[\phi^* ( {f + \beta}{} ) ]   \big \}\\
               =&\operatorname*{max}_{{f}\in \mathcal{F}} \mathbb{E}_{Q}[f \frac{dP}{dQ}] - \operatorname*{min}_{\beta}  \big \{
                   - \beta +  \mathbb{E}_{Q}[\phi^* ( {f + \beta}{} ) ]   \big \}\\
           \end{aligned}
       \end{equation}
       Let $f + \beta=Y$, we have:
       \begin{equation}
           \begin{aligned}
               &\operatorname*{max}_{{f}\in \mathcal{F}} \mathbb{E}_{Q}[f \frac{dP}{dQ}] - \operatorname*{min}_{\beta}  \big \{
                   - \beta + \mathbb{E}_{Q}[\phi^* ( {f + \beta} ) ]   \big \}\\
               =&\operatorname*{max}_{{Y}\in \mathcal{F}} \operatorname*{min}_{\beta} \mathbb{E}_{Q}[ (Y-\beta) \frac{dP}{dQ}] -  \big \{
                   - \beta +  \mathbb{E}_{Q}[\phi^* ( {Y}{} ) ]   \big \}\\
               =&\operatorname*{max}_{{Y}\in \mathcal{F}} \mathbb{E}_{Q}[ Y \frac{dP}{dQ} -  \phi^* ( {Y}{} )] 
               + \operatorname*{min}_{\beta} \beta \mathbb{E}_{Q} [1 - \frac{dP}{dQ} ] \\
               =&\operatorname*{max}_{{Y}\in \mathcal{F}} \mathbb{E}_{Q}[ Y \frac{dP}{dQ} -  \phi^* ( {Y}{} )] 
               +  0 \\
           \end{aligned}
           \label{eq_fthetay}
       \end{equation}
       The first equality follows from replacing $f+\theta_1$ with $Y$. 
       The second equality is a re-arrangement for optmizing $\beta$. 
       The third equation holds as $\mathbb{E}_{Q} [1 - \frac{dP}{dQ} ]=0$.

   Applying Theorem ~\ref{thm_rearange}, the last supremum reduces to:
   \begin{equation}
     \begin{aligned}
        &\operatorname*{max}_{{Y}\in \mathcal{F}} \mathbb{E}_{Q}[ Y \frac{dP}{dQ} -  \phi^* ( {Y}{} )] \\
        = &\mathbb{E}_{Q}[\operatorname*{sup}_{{Y}\in \mathcal{F}} \{ Y \frac{dP}{dQ} -  \phi^* ( {Y}{} ) \} ]\\
        = &\mathbb{E}_{Q}[\phi^{**} ( {\frac{dP}{dQ}}{} ) ] \\
        = &\mathbb{E}_{Q}[\phi ( {\frac{dP}{dQ}}{} ) ] \\
        = & D_{\phi}(P||Q) \\
     \end{aligned}
   \end{equation}
   where the last equality follows from the fact that $\phi^{**}=\phi$.
   This concludes the proof.
  \end{enumerate}
\end{proof}

\subsection{Proof of $Q^*$}
\label{appendix_6}
% \phithm*
\begin{proof}
From theorm\ref{thm:cl2dro}, CL-DRO can be rewrriten as:
\begin{equation}
    \begin{aligned}
        \mathcal{L}_{\text{CL-DRO}}^{\phi} & = - \mathbb{E}_{P_0}[f_\theta] + \operatorname*{min}_{ \beta}  \big \{
            \alpha^* \eta - \beta + \alpha^* \mathbb{E}_{Q_0}[\operatorname*{max}_{L} \{ \frac{f_\theta + \beta}{\alpha^*} L  - \phi(L) \}]   \big \}\\
            & = - \mathbb{E}_{P_0}[f_\theta] + \operatorname*{min}_{ \beta}  \big \{
            \alpha^* \eta - \beta + \alpha^* \mathbb{E}_{Q_0}[\phi^* ( \frac{f_\theta + \beta}{\alpha^*} ) ]   \big \}\\
    \end{aligned}
\end{equation}
For the inner optimzation, we can draw the optimal $L$ for $\operatorname*{max}_{L} \{ \frac{f_\theta + \beta}{\alpha^*} L  - \phi(L) \}$ as: 
\begin{equation}
  \label{l_1}
  L = e^{\frac{f_\theta+ \beta}{\alpha^*}} 
\end{equation}

For the outer optimization, we can draw the optimal $\beta$ for $\operatorname*{min}_{\beta}  \big \{
                \alpha^* \eta - \beta + \alpha^* \mathbb{E}_{Q_0}[ e^{ \frac{f_\theta + \beta}{\alpha^*} } - 1 ]   \big \}$ as
\begin{equation}
  \label{eta_1_}
  \beta =  \alpha^* \log \mathbb{E}_{Q_0} e^{-\frac{f_\theta}{\alpha^*}}
\end{equation}
   
   % To obtain the expression of $Q^*$, we revisit the solution process of Theorem ~3.4 and perform a simultaneous analysis of the two steps of convex optimization.

   % \begin{equation}
   %    \label{lianli_1}
   %    \operatorname*{max}_{L} \{ \frac{f_\theta + \beta}{\alpha} L  - \phi(L) \}
   % \end{equation}
   % \begin{equation}
   %    \label{lianli_2}
   %    \operatorname*{min}_{\alpha\geq 0, \beta}  \big \{
   %              \alpha \eta - \beta + \alpha \mathbb{E}_{Q_0}[ e^{ \frac{f_\theta + \beta}{\alpha} } - 1 ]   \big \}
   % \end{equation}
   % In terms of Equation~\ref{lianli_1}, we can derive the expression for $L$:
   % % Towards the Equation ~\ref{lianli_1}, we have the expression of $L$:
   % \begin{equation}
   %    \label{l_1}
   %    L = e^{\frac{f_\theta+ \beta}{\alpha}} 
   % \end{equation}
   % In terms of Equation~\ref{lianli_2}, we can derive the expression for $\beta$:
   % % Towards the Equation ~\ref{lianli_2}, we have the expression of $\beta$:

   Then we plug Equation \eqref{eta_1_} into Equation \eqref{l_1}.
   \begin{equation}
      L = \frac{e^{\frac{f_\theta}{\alpha^*}}}{\mathbb{E}_{Q_0} [e^{\frac{f_\theta}{\alpha^*}}]}
   \end{equation}
   Based on the definition of $L$, we can derive the expression for $Q^*$:
   % Combining with the definition of $L$, we have the formula of $Q^*$:
   \begin{equation}
      Q^* = \frac{e^{\frac{f_\theta}{\alpha^*}}}{\mathbb{E}_{Q_0} [e^{\frac{f_\theta}{\alpha^*}}]} Q_0
   \end{equation}
\end{proof}

\section{Experiments}
\label{appendix_furexp}

Figure \ref{fig_objective_code} shows PyTorch-style pseudocode for the standard objective, the adjusted InfoNCE objective. The proposed adjusted reweighting loss is very simple to implement, requiring only two extra lines of code compared to the standard objective. 

\begin{figure*}[h]
   \lstinputlisting[language=Python]{./debiased-importance-objective.sh}
      \caption{Pseudocode for our proposed adjusted InfoNCE objective, as well as the original NCE contrastive objective. The implementation of our adjusted reweighting method only requires two additional lines of code compared to the standard objective.} 
      \label{fig_objective_code}
\end{figure*}

% \section{You \emph{can} have an appendix here.}

% You can have as much text here as you want. The main body must be at most $8$ pages long.
% For the final version, one more page can be added.
% If you want, you can use an appendix like this one, even using the one-column format.
%%%%%%%%%%%%%%%%%%%%%%%%%%%%%%%%%%%%%%%%%%%%%%%%%%%%%%%%%%%%%%%%%%%%%%%%%%%%%%%
%%%%%%%%%%%%%%%%%%%%%%%%%%%%%%%%%%%%%%%%%%%%%%%%%%%%%%%%%%%%%%%%%%%%%%%%%%%%%%%
% \section{Experiments Setup}
\subsection{Visual Representation} 
\textbf{Model.}
For contrastive learning on images, we adopt SimCLR \cite{chen2020simple} as our baseline and follow the same experimental setup as \cite{chuang2020debiased}. Specifically, we use the ResNet-50 network as the backbone. To ensure a fair comparison, we set the embedded dimension to 2048 (the representation used in linear readout) and project it into a 128-dimensional space (the actual embedding used for contrastive learning).
Regarding the temperature parameter $\tau$, we use the default value $\tau_0$ of 0.5 in most researches, and we also perform grid search on $\tau$ varying from 0.1 to 1.0 at an interval of 0.1, denoted by $\tau^*$. The best parameters for each dataset is reported in Table \ref{Model architectures and hyperparameters}. Note that $\{\cdot\}$ indicates the range of hyperparameters that we tune and the numbers in \textbf{bold} are the final settings. For $\alpha$-CL, we follow the setting of \cite{tian2022understanding}, where $p=4$ and $\tau=0.5$. We use the Adam optimizer with a learning rate of 0.001 and weight decay of $1e-6$. All models are trained for 400 epochs.

\begin{table}[h]
   \centering
   \caption{hyperparameters setting on each datasets.}
   \label{Model architectures and hyperparameters}
   \begin{center}
   \begin{small}
   \begin{sc}
   \resizebox{0.98\textwidth}{!}{%
   \begin{tabular}{c|c|c|c}
   \toprule
   Datasets & CIFAR10 & STL10 & CIFAR100 \\
   \midrule
   Best $\tau$ & \{ 0.1, 0.2, 0.3, \textbf{0.4}, 0.5, 0.6 \} & \{ 0.1, \textbf{0.2}, 0.3, {0.4}, 0.5, 0.6 \} & \{ 0.1, 0.2, \textbf{0.3}, {0.4}, 0.5, 0.6 \} \\
   $\mu$ &  \{ {0.5}, 0.6, \textbf{0.7}, 0.8, 0.9 \}  &   \{ {0.5}, 0.6, 0.7, \textbf{0.8}, 0.9 \} & \textbf{0.5}, 0.6, 0.7, 0.8, 0.9 \} \\
   $\sigma$ &  \{\textbf{0.5}, 1.0 \} &   \{0.5, \textbf{1.0} \} &  \{0.5, \textbf{1.0}\} \\
   \bottomrule
   \end{tabular}
   }
   \end{sc}
   \end{small}
   \end{center}
   \vskip -0.1in
\end{table}

% \textbf{Linear evaluation.}

\textbf{Noisy experiments in Section \ref{empirical_analyze}.} To investigate the relationship between the temperature parameter $\tau$ (or $\eta$) and the noise ratio, we follow the approach outlined in \cite{chuang2020debiased} and utilize the class information of each image to select negative samples as a combination of true negative samples and false negative samples. 
Specifically, $r_{ratio}=0$ indicates all negative samples are true negative samples, $r_{ratio}=0.5$ suggests 50\% of true positive samples existing in negative samples, $r_{ratio}=1$ means uniform sampling.
% When $r_{ratio}=0$, all negative samples are true negative samples. When $r_{ratio}=1$, it can be considered a uniform sampling. Specifically, $r_{ratio}=0.5$ means that 50\% of negative samples are false negatives.

\textbf{Variance analysis in Section \ref{empirical_analyze}.} To verify the mean-variance objective of InfoNCE, we adopt the approach outlined in \cite{zhu2021improving} and record the negative prediction scores for 256 samples (assuming a batch size of 256) in each minibatch. Specifically, we randomly select samples from a batch to calculate the statistics and visualize them. (1) For positive samples, we calculate cosine similarity by taking the inner product after normalization, and retain the mean value of the 256 positive scores as `\textit{pos mean}'. (2) For negative samples, we average the means and variances of 256 negative samples to show the statistical characteristics of these N negative samples `\textit{(mean neg; var neg)}'. We record this data at each training step to track score distribution throughout the training process.

\subsection{Sentence Representation}
For the sentence contrastive learning, we adopt the approach outlined in \cite{gao2021simcse} and evaluate our method on 7 popular STS datasets: STS tasks from 2012-2016, STS-B and SICK-R. We utilize the SentEval toolkit to obtain all 7 datasets. Each dataset includes sentence pairs which are rated on a scale of 0 to 5, indicating the degree of semantic similarity. To validate the effective of our proposed method, we utilize several methods as baselines: average GloVe embeddings, BERT-flow, BERT-whitening, CT-BERT and SimCSE.
The best parameters for each dataset is reported in Table \ref{Model architectures and hyperparameters BERT}. To ensure fairness, we employed the official code, which can be accessed at \url{https://github.com/princeton-nlp/SimCSE}.
% we follow \cite{gao2021simcse} and conduct on 7 common STS datasets: STS tasks takes 2012-2016, STS-B and SICK-R. We use the SentEval toolkit to download all 7 datasets. The sentence pairs in each dataset are scored from 0 to 5 to indicate semantic similarity.
\begin{table}[h]
   \centering
   \caption{hyperparameters setting on sentence CL. Note that $\{\cdot\}$ indicates the range of hyperparameters that we tune and the numbers in \textbf{bold} are the final settings.}
   \label{Model architectures and hyperparameters BERT}
   \begin{center}
   \begin{small}
   \begin{sc}
   \resizebox{0.98\textwidth}{!}{%
   \begin{tabular}{c|c|c}
   \toprule
   Datasets & \ours-BERT\ba & \ours-RoBERTa\ba  \\
   \midrule
   Best $\tau$ & \{ 0.01, 0.02, 0.03, 0.04, 0.05, 0.06, \tf{0.07}, 0.08, 0.09, 0.10, 0.15, 0.20\} & \{ 0.01, 0.02, 0.03, 0.04, 0.05, \tf{0.06}, {0.07}, 0.08, 0.09, 0.10, 0.15, 0.20\} \\
   $\mu$ &  \{0.3, \tf{0.4},  {0.5}, 0.6, {0.7}, 0.8, 0.9 \}  &   \{ {0.5}, 0.6, 0.7, {0.8}, 0.9, 1.0, 1.5, \tf{2.0}, 2.5, 3.0\} \\
   $\sigma$ &  \{{0.5}, \tf{1.0} \} &   \{0.5, \textbf{1.0} \} \\
   \bottomrule
   \end{tabular}
   }
   \end{sc}
   \end{small}
   \end{center}
   \vskip -0.1in
\end{table}

\subsection{Graph Representation}
For the graph contrastive learning experiments on TU-Dataset \cite{TU-dataset}, we adopted the same experimental setup as outlined in \cite{you2020graph}. The dataset statistics can be found in Table \ref{unsupervised learning datasets statistics}. To ensure fairness, we employed the official code, which can be accessed at \url{https://github.com/Shen-Lab/GraphCL/tree/master/unsupervised_TU}. We made only modifications to the script by incorporating our ADNCE method and conducting experiments on the hyper-parameter $\mu \in \{ 0.5, 0.6 ,0.7 , 0.8, 0.9, 1.0 \}$ and $\sigma=1$ on most datasets. Each parameter was repeated from scratch five times, and the best parameter was selected by evaluating on the validation dataset.
The best parameters for each dataset is reported in Table \ref{Model architectures and hyperparameters graphcl}. 
% \textbf{Datasets} 

We summarize the statistics of TU-datasets \cite{TU-dataset} for unsupervised learning in Table \ref{unsupervised learning datasets statistics}. 
Table \ref{unsupervised performance table_appendix} demonstrates the consistent superiority of our proposed ADNCE approach.
\begin{table}[h]
\caption{Statistics for unsupervised learning TU-datasets.}
\label{unsupervised learning datasets statistics}
\begin{center}
\begin{small}
\begin{sc}
% \resizebox{0.98\textwidth}{!}{%
\begin{tabular}{c|c|c|c|c}
\toprule
Datasets & Category & Graphs\# & Avg. N\# & Avg. Degree\\
\midrule
NCI1 & Biochemical Molecules &  4,110 & 29.87 & 1.08\\
PROTEINS & Biochemical Molecules &  1,113 & 39.06 & 1.86\\
DD & Biochemical Molecules &  1,178 & 284.32 & 715.66\\
MUTAG & Biochemical Molecules &  188 & 17.93 & 19.79\\
COLLAB & Social Networks &  5,000 & 74.49 & 32.99\\
RDT-B & Social Networks &  2,000 & 429.63 & 1.15\\
RDT-M & Social Networks &  2,000 & 429.63 & 497.75\\
IMDB-B & Social Networks &  1,000 & 19.77 & 96.53\\
\bottomrule
\end{tabular}
% }
\end{sc}
\end{small}
\end{center}
\vskip -0.1in
\end{table}

\begin{table}[h]
   \centering
   \caption{hyperparameters setting on graph CL. Note that $\{\cdot\}$ indicates the range of hyperparameters that we tune and the numbers in \textbf{bold} are the final settings.}
   \label{Model architectures and hyperparameters graphcl}
   \begin{center}
   \begin{small}
   \begin{sc}
   % \resizebox{0.98\textwidth}{!}{%
   \begin{tabular}{c|c|c|c}
   \toprule
   Datasets & Best $\tau$ & $\mu$ & $\sigma$ \\
   \midrule
   NCI1   & \{ \tf{0.05}, 0.10, 0.15, 0.20, 0.25\}  & \{ {0.2}, {0.3} ,0.4, 0.5, 0.6, \tf{0.7}, 0.8, 0.9 \} & \{ 0.5, \tf{1.0} \} \\
   PROTEINS & \{ \tf{0.05}, 0.10, 0.15, 0.20, 0.25\} & \{ 0.5, 1.0, \tf{1.5}, 2.0 \} & \{ 0.5, \tf{1.0} \} \\
   DD & \{ {0.05}, 0.10, 0.15, \tf{0.20}, 0.25\}  & \{ \tf{0.2}, 0.3 ,0.4, 0.5, 0.6, {0.7}, 0.8, 0.9 \} & \{ 0.5, \tf{1.0} \} \\
   MUTAG  &  \{ {0.05}, 0.10, \tf{0.15}, {0.20}, 0.25\}  & \{ {0.2}, 0.3 ,0.4, 0.5, 0.6, \tf{0.7}, 0.8, 0.9 \} & \{ 0.5, \tf{1.0} \} \\
   COLLAB  &  \{ {0.05}, \tf{0.10}, {0.15}, {0.20}, 0.25\}  & \{ \tf{0.2}, 0.3 ,0.4, 0.5, 0.6, {0.7}, 0.8, 0.9 \} & \{ 0.5, \tf{1.0} \} \\
   RDT-B &  \{ {0.05}, {0.10}, \tf{0.15}, {0.20}, 0.25\}  & \{ {0.2}, \tf{0.3} ,0.4, 0.5, 0.6, {0.7}, 0.8, 0.9 \} & \{ 0.5, \tf{1.0} \} \\
   RDT-M  &  \{ {0.05}, {0.10}, \tf{0.15}, {0.20}, 0.25\}  & \{ {0.2}, {0.3} ,0.4, 0.5, 0.6, {0.7}, \tf{0.8}, 0.9 \} & \{ 0.5, \tf{1.0} \} \\
   IMDB-B  &  \{ {0.10}, {0.20}, {0.30}, {0.40}, \tf{0.50}\}  & \{ {0.2}, {0.3} ,0.4, 0.5, 0.6, \tf{0.7}, 0.8, 0.9 \} & \{ 0.5, \tf{1.0} \} \\
   \bottomrule
   \end{tabular}
   % }
   \end{sc}
   \end{small}
   \end{center}
   \vskip -0.1in
\end{table}

\begin{table}[h]
   \caption{Unsupervised representation learning classification accuracy (\%) on TU datasets. The compared numbers are from  except AD-GCL, whose statistics are reproduced on our platform. \textbf{\underline{Bold}} indicates the best performance while \underline{underline} indicates the second best on each dataset.}
   \label{unsupervised performance table_appendix} 
   % \vskip 0.15in
   \begin{center}
   \begin{scriptsize}
   \begin{sc}
   \resizebox{0.98\textwidth}{!}{%
   \begin{tabular}{lcccccccc|cc}
   \toprule
   {\scriptsize Dataset} & {\scriptsize NCI1} & {\scriptsize PROTEINS} & {\scriptsize DD} & {\scriptsize MUTAG} & {\scriptsize COLLAB} & {\scriptsize RDT-B} & {\scriptsize RDT-M5K} & {\scriptsize IMDB-B} & {\scriptsize AVG.}\\
   \midrule
   No Pre-Train & 65.40\tiny{$\pm$0.17}& 72.73\tiny{$\pm$0.51} & 75.67\tiny{$\pm$0.29} & 87.39\tiny{$\pm$1.09} & 65.29\tiny{$\pm$0.16} &76.86 \tiny{$\pm$0.25} & 48.48\tiny{$\pm$0.28} & 69.37\tiny{$\pm$0.37} & 70.15\\
   InfoGraph& 76.20\tiny{$\pm$ 1.06}& {74.44\tiny{$\pm$ 0.31}}& 72.85\tiny{$\pm$ 1.78}& \underline{{89.01\tiny{$\pm$1.13}}} & 70.05\tiny{$\pm$1.13} &82.50\tiny{$\pm$1.42} & 53.46\tiny{$\pm$1.03} &{{73.03\tiny{$\pm$0.87}}} &74.02\\
   GraphCL& {77.87\tiny{$\pm$0.41}}& 74.39\tiny{$\pm$0.45}& {78.62\tiny{$\pm$0.40}} &86.80\tiny{$\pm$1.34} & {71.36\tiny{$\pm$1.15}}&89.53\tiny{$\pm$0.84} &{55.99\tiny{$\pm$0.28}} &71.14\tiny{$\pm$0.44} &75.71\\
   AD-GCL& 73.91\tiny{$\pm$0.77}& 73.28\tiny{$\pm$0.46}& 75.79\tiny{$\pm$0.87}& {88.74\tiny{$\pm$1.85}}& \underline{{72.02\tiny{$\pm$0.56}}}& {90.07\tiny{$\pm$0.85}}&54.33\tiny{$\pm$0.32} &70.21\tiny{$\pm$0.68} & 74.79\\
   RGCL& \underline{{78.14\tiny{$\pm$1.08}}}& \underline{{75.03\tiny{$\pm$0.43}}}& \underline{{78.86\tiny{$\pm$0.48}}} & 87.66\tiny{$\pm$1.01}& 70.92\tiny{$\pm$0.65}& \underline{{90.34\tiny{$\pm$0.58}}}& \tf{{56.38\tiny{$\pm$0.40}}}& \tf{71.85\tiny{$\pm$0.84}}& 76.15\\
   \hline
   ADNCE & {\tf{79.30\tiny{$\pm$0.67}}}& \tf{{75.10\tiny{$\pm$0.25}}}& \tf{{79.23\tiny{$\pm$0.59}}} & \tf{89.04\tiny{$\pm$1.30}}& \tf{72.26\tiny{$\pm$1.10}}& \tf{{91.39\tiny{$\pm$0.31}}}& \underline{56.01\tiny{$\pm$0.35}} & \underline{71.58\tiny{$\pm$0.72}} & \bf76.74 \\
   % Our Rank& 1 & 1 & 1 & 3 & 3 & 1 & 1 & 2 & 1\\
   \bottomrule
   \end{tabular}
   }
   \end{sc}
   \end{scriptsize}
   \end{center}
   \vspace{-15pt}
   \end{table}
\newpage
\subsection{Additional ablation study}
The primary motivation behind ADNCE is to mitigate the issues of over-conservatism and sensitivity to outliers. These limitations stem from the unreasonable worst-case distribution, which assigns excessive weights to the hardnest negative samples. Consequently, any weighting strategy capable of modulating the worst-case distribution to focus more on the informative region holds promising. In our ADNCE, we opted for Gaussian-like weights due to its flexibility and unimodal nature. However, alternative weighting strategies such as Gamma, Rayleigh or Chi-squared could also be employed. The following experiment demonstrates that these alternative weighting strategies can yield comparable results to Gaussian-like weights.
\begin{table}[h]
   \caption{Alternative weighting strategies for ADNCE.}
   \label{alternative weighting strategies for ADNCE}
   \begin{center}
   \begin{small}
   \begin{sc}
   % \resizebox{0.98\textwidth}{!}{%
   \begin{tabular}{c|c|c}
   \toprule
   Weight Strategy & Probability density function & CIFAR10 \\
   \midrule
   Gamma-like & $w(x,m,n)=\frac{1}{\Gamma(m)n^m}x^{m-1}e^{-\frac{x}{n}}$ & 91.74 \\
   Rayleigh-like & $w(x,m)=\frac{x}{m^2}e^{-\frac{x^2}{2m^2}}$  & 91.73 \\
   Chi-squared-like & $w(x,m)=\frac{1}{2^{m/2}\Gamma{(m/2)}}x^{m/2-1}e^{-{x}/{2}}$ &91.99 \\
   Gaussian-like (ADNCE) & $w(x,m,n)=\frac{1}{n\sqrt{2\pi}}e^{-\frac{1}{2} (\frac{x-m}{n})^2} $ & 91.88\\

   \bottomrule
   \end{tabular}
   % }
   \end{sc}
   \end{small}
   \end{center}
   \vskip -0.1in
   \end{table}
The above table showcases the comparison of the TOP-1 accuracy performance on CIFAR10 dataset under different weight strategies. The parameters $m$ and $n$ are utilized to denote the parameters within their corresponding probability density function, while the variable $x$ represents a random variable. It is important to note that, due to the domain definition of some PDFs being $(0,+\infty)$, we need to set ${x=prediction\_score + 1}$.
\newpage
\subsection{Sensitivity analysis of $\mu$ and $\sigma$}
\begin{table}
   \caption{Sensitivity analysis of $\mu$. }
   \begin{center}
   \begin{tabular}{c|c|c|c|c|c}
   \hline
   $\mu$ & 0.1 & 0.3 & 0.5 & 0.7 & 0.9 \\
   \hline
   CIFAR10 & 91.3 & 91.66 & 91.9 & 91.77 & 92.25 \\
   STL10 & 87.84 & 88.22 & 87.56 & 88.48 & 88.45 \\
   CIFAR100 & 69.34 & 69.31 & 68.70 & 69.24 & 68.95 \\
   \hline
   \end{tabular}
\end{center}
\end{table}
   
\begin{table}
   \caption{Sensitivity analysis of $\sigma$. }
   \begin{center}
   \begin{tabular}{c|c|c|c|c|c|c|c}
   \hline
   $\sigma$ & 0.2 & 0.4 & 0.6 & 0.8 & 1 & 1.5 & 2 \\
   \hline
   CIFAR10 & 90.07 & 91.85 & 92.02 & 91.77 & 91.72 & 91.69 & 91.94 \\
   STL10 & 86.54 & 88.30 & 88.10 & 87.54 & 88.95 & 88.12 & 88.40 \\
   CIFAR100 & 67.38 & 69.36 & 69.52 & 69.01 & 68.70 & 69.24 & 69.42 \\
   \hline 
\end{tabular}
\end{center}  
\end{table}
The above tables showcase the comparisons of the TOP-1 accuracy performance on three dataset under different $\mu$ and $\sigma$.
As can be seen, changing the parameters $\mu$ and $\sigma$ would impact the model performance, but not as dramatical as tuning the parameter $\tau$. (For example, on STL10 $\sigma$ from 1.0 to 0.2 just brings 2.7\% performance drops, while changing $\tau$ brings 7.6\% performance gap.) This outcome indicates that tuning $\mu$ and $\sigma$ is not a significant burden. In most scenarios, it may suffice to set $\sigma=1$, requiring only the tuning of $\mu$ within the range of 0.1 to 0.9 (can refer to Table \ref{Model architectures and hyperparameters}, \ref{Model architectures and hyperparameters BERT}, \ref{Model architectures and hyperparameters graphcl} in Appendix B for more details).

\end{document}